%% file: root.tex
\documentclass[letterpaper, 10 pt, conference]{ieeeconf}  

\IEEEoverridecommandlockouts                              

\input{tex_packages.tex}
\input{tex_settings.tex}

\theoremstyle{plain}

\newtheorem{assumption}{Assumption}
\newtheorem{theorem}{Theorem}

\title{\LARGE \bf
SafeFlowMPC: Predictive and Safe Trajectory Planning for Robot Manipulators with Learning-based Policies
}

\author{Thies Oelerich$^{1}$, Gerald Ebmer$^{1}$, Christian Hartl-Nesic$^{1}$, Andreas Kugi$^{1, 2}$
\thanks{$^{1}$All authors are with the Automation and Control Institute (ACIN), TU
    Wien, Vienna, Austria,
    {\tt\small \{oelerich, ebmer, hartl, kugi\}@acin.tuwien.ac.at}}%
\thanks{$^{2}$Andreas Kugi is with the center for Vision, Automation \& Control,
    AIT Austrian Institute of Technology GmbH, Vienna, Austria
{\tt\small andreas.kugi@ait.ac.at}}%
}

\begin{document}

\maketitle

\global\csname @topnum\endcsname 0
\global\csname @botnum\endcsname 0

\thispagestyle{empty}
\pagestyle{empty}

\begin{abstract}
	The emerging integration of robots into everyday life brings several major challenges. Compared to classical industrial applications, more flexibility is needed in combination with real-time reactivity. Learning-based methods can train powerful policies based on demonstrated trajectories, such that the robot generalizes a task to similar situations. However, these black-box models lack interpretability and rigorous safety guarantees. Optimization-based methods provide these guarantees but lack the required flexibility and generalization capabilities. This work proposes SafeFlowMPC, a combination of flow matching and online optimization to combine the strengths of learning and optimization. This method guarantees safety at all times and is designed to meet the demands of real-time execution by using a suboptimal model-predictive control formulation. SafeFlowMPC achieves strong performance in three real-world experiments on a KUKA 7-DoF manipulator, namely two grasping experiment and a dynamic human-robot object handover experiment. A video of the experiments is available at \href{https://www.acin.tuwien.ac.at/en/42d6}{https://www.acin.tuwien.ac.at/en/42d6}. The code is available at \href{https://github.com/TU-Wien-ACIN-CDS/SafeFlowMPC}{https://github.com/TU-Wien-ACIN-CDS/SafeFlowMPC}.
\end{abstract}

\section{INTRODUCTION}

Trajectory planning for robot manipulators is inherently difficult due to multiple factors. Firstly, the kinematics are non-linear, which leads to multiple solutions for a given task, and have singularities and physical limits that must be considered~\cite{lynchModernRoboticsMechanics2017}. Secondly, the environment in which the robot acts in may change during operation, requiring online adaptions of the motion during the operation. This might happen due to a change of the task initiated by an operator~\cite{oelerichBoundPlannerConvexSetBasedApproach2025}, moving obstacles in the scene~\cite{khansari-zadehRealtimeAvoidanceFast2012, kiemelSafeReinforcementLearning2024}, or other actors in the environment~\cite{oelerichModelPredictiveTrajectory2024, khoramshahiDynamicalSystemApproach2019}.
Thirdly, many task objectives for robot motions are hard to encode in numerical objective functions. Hence, existing optimization algorithms are difficult to apply. Examples include human-robot object handovers~\cite{oelerichModelPredictiveTrajectory2024, kimLearningbasedDynamicRobottoHuman2025, ortenziObjectHandoversReview2021}, cleaning of sinks~\cite{ungerProSIPProbabilisticSurface2024}, and generalizability to different environments~\cite{kimOpenVLAOpenSourceVisionLanguageAction2025}.
Lastly, physical interaction with the world requires safety considerations for actors in the scene, other objects, and the robot\textquotesingle s hardware. The robot must not collide with obstacles to avoid physical damage to the environment and itself.

Many solutions exist to tackle motion planning in such challenging environments. Notably, global optimization-based and sampling-based planners~\cite{elbanhawiSamplingBasedRobotMotion2014, schulmanMotionPlanningSequential2014} exist to plan a trajectory for the entire task with constraints. These require a numerical objective and are not capable to adapt the robot motion in real time, which is necessary to react to changes in the environment. To improve the computational efficiency, finite-horizon planning, e.g., using model-predictive control strategies~\cite{oelerichBoundMPCCartesianPath2025, beckModelPredictiveTrajectory2024a} or sampling-based approaches~\cite{otteRRTXAsymptoticallyOptimal2016}, is employed. The improved computational efficiency comes at the cost of degraded optimality of the solution but enables reactive behavior in real-time to changes in the environments. However, these approaches need well-posed problem formulations, i.e., a numerical reward must be designed for the task, which is often complicated. An alternative are learning-based methods~\cite{jannerPlanningDiffusionFlexible2022, sahaEDMPEnsembleofcostsguidedDiffusion2024, figueiredoprudencioSurveyOfflineReinforcement2024}, which enable fast inference times to plan motions online. The design of numerical rewards can be avoided by learning from a dataset of demonstrations~\cite{ravichandarRecentAdvancesRobot2020, rossReductionImitationLearning2011}. These demonstrations encode the desired behavior in diverse scenarios and the learning agent learns generalized behavior in similar situations. The downside of these approaches is the lack of systematic safety considerations as the learning agent is often modeled as a black box.

Flow matching~\cite{lipmanFlowMatchingGuide2024} and diffusion models~\cite{jannerPlanningDiffusionFlexible2022} have recently shown promising results in robot trajectory planning. Instead of learning the distribution over desired trajectories, these methods learn probability paths from a source distribution to the target distribution. This involves an iterative procedure during inference to create trajectories of the target distribution. During these iterative steps, the intermediate trajectories can be adapted to bias the model towards a desired behavior, e.g., enforcing safety. Current approaches include solving an optimization problem~\cite{romerDiffusionPredictiveControl2025}, using control barrier functions~\cite{daiSafeFlowMatching2025}, and using cost guidance~\cite{sahaEDMPEnsembleofcostsguidedDiffusion2024}. We extend this work by tailoring it further toward robot manipulators. As non-convex optimization~\cite{romerDiffusionPredictiveControl2025} compromises real-time capabilitiy as it my exhibit significant variety in terms of computation time. Cost guidance with gradients~\cite{sahaEDMPEnsembleofcostsguidedDiffusion2024} is unreliable as it cannot guarantee constraint satisfaction and does not scale well to complex environments and models. Control barrier functions~\cite{daiSafeFlowMatching2025} are difficult to design and therefore limit the performance of the system.

In real-world applications safety is critical and needs to be systematically enforced. For example, safety filters~\cite{wabersichPredictiveSafetyFilter2021, wabersichDataDrivenSafetyFilters2023, gargLearningSafeControl2024} are employed to project a potentially unsafe input signal into a set of safe inputs. This is generally possible for any kind of learning policy but has the disadvantage that the system behavior changes, which alters the state distribution and, thus, deteriorates performance of the policy. Therefore, it is advantageous to incorporate the safety filtering more deeply with the agent. The work in~\cite{liuSafeReinforcementLearning2025a} includes the safety consideration directly in the learning process, but this approach does not scale well to predictive planning.
In~\cite{fisacBridgingHamiltonJacobiSafety2019a}, the authors use Hamilton-Jacobi reachability~\cite{bansalHamiltonJacobiReachabilityBrief2017} to improve safety.  A one-step QP-projection is used to create a safe reinforcement learning policy update in~\cite{sunFISARForwardInvariant2021}. Constrained reinforcement learning often considers the expected constraint violations~\cite{sunFISARForwardInvariant2021}, which is extended to  worst-case safety constraints in~\cite{yangWCSACWorstCaseSoft2021}. A learning objective in Lagrangian formulation is proposed in~\cite{stookeResponsiveSafetyReinforcement2020} to improve safe behavior, which is applied to learning from demonstrations in~\cite{liu2024offlinesaferl}. The authors in~\cite{berkenkampSafeModelbasedReinforcement2017} rely on Gaussian processes with a safe backup policy to ensure safety.
However, these methods struggle to scale to complicated systems like robot manipulators, or cannot ensure constraint satisfaction at all times, providing only probabilistic bounds. For many constraints like collision avoidance or physical manipulator limits, probabilistic bounds are not sufficient, and the policy cannot be safely employed on the robot.

This work focuses on safe learning from demonstrations for online trajectory planning for robot manipulators. In particular, we use an adapted flow-matching model to enforce safe behavior and present constraint design considerations to enforce safety at all times.
The contributions of this work are as follows:
\begin{itemize}
	\item A novel safe flow-matching procedure, called SafeFlowMPC, is developed, where the flow-matching model improves performance and a suboptimal real-time optimization solver enforces safety. This provides a deep integration of the learning agent and constraint enforcement.
	\item SafeFlowMPC guarantees safety at all times by enforcing a safe terminal constraint.
	\item Reactive motion planning is demonstrated on a real-world 7-DoF robot manipulator. Two challenging scenarios are considered: An object grasp with a dynamically changing grasping position, and a human-robot object handover.
\end{itemize}
By integrating a flow-matching model with a real-time optimization backend, our method combines the adaptability of learning-based planning with the safety guarantees of classical control methods. This addresses the challenges of safety, reactivity, and real-time applicability in dynamic environments.

\section{Formulation}
\label{sec:formulation}

\begin{figure}[t]
	\centering
	\def\svgwidth{0.8\linewidth}
	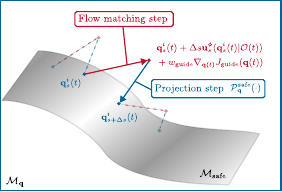
	\caption{Visual explanation of the optimization scheme of the proposed method.
	At time step $i$ the trajectory $\vec{q}^{i}_{s}(t)$ on the safety manifold $\mathcal{M}_{\mathrm{safe}}$ is improved by a flow matching step (red) and projected back onto $\mathcal{M}_{\mathrm{safe}}$ (blue) to obtain the safe trajectory $\vec{q}^{i}_{\mathrm{s + \Delta s}}(t)$. These two steps are executed multiple times for each time step (dashed lines) to traverse the flow from $s = 0$ to $s = 1$.}
	\label{fig:method_overview}
\end{figure}

The motion planning problem consists of finding a trajectory $\vec{q}^{*}(t)$ for the configuration state $\vec{q}$ of a robot manipulator that satisfies the motion constraints and achieves the desired objective. Formally, this is defined as
\begin{equation}
	\label{eq:motion_planning_problem}
	\vec{q}^{*}(t) = \arg\min_{\vec{q}(t) \in \mathcal{M}_{\mathrm{safe}}} \int_{t_{0}}^{t_{0}+T} J(\vec{q}(t)) \mathrm{d}t \text{~,}
\end{equation}
where the trajectory $\vec{q}^{*}(t)$ starts at $t_0$ with a trajectory duration $T$.
The manifold of allowed trajectories $\mathcal{M}_{\mathrm{safe}}$ is defined by enforcing all motion constraints, and the objective function $J(\vec{q}(t))$ is minimal when the desired objective is achieved.
A visualization of the formulation is shown in~\cref{fig:method_overview}.
For a simple movement task, the objective function is defined as the distance to a desired configuration. The constraints limit the allowed trajectories to the manifold $\mathcal{M}_{\mathrm{safe}}$ which contains all safe trajectories.
Difficulties arise due to the complicated form of $\mathcal{M}_{\mathrm{safe}}$ and the numerical objective function $J(\vec{q}(t))$ that achieves the desired task.
For the latter, learning from demonstrations~\cite{ravichandarRecentAdvancesRobot2020} has been successful in planning robot motions without explicitly specifying $J(\vec{q}(t))$. Instead, motions are learned that behave similar to a set of demonstrated trajectories. These learning-based methods often lack the ability to adhere to certain constraints and to keep the robot on $\mathcal{M}_{\mathrm{safe}}$ at all times.

In this work, we propose a combination of flow matching~\cite{lipmanFlowMatchingGuide2024} for learning behavior from demonstrations and optimizing $J(\vec{q}(t))$ with an optimization-based MPC to adhere to motion constraints and keep $\vec{q}(t)$ on $\mathcal{M}_{\mathrm{safe}}$.

\subsection{Safety Manifolds}
\label{ssec:manifolds}
The manifold of all possible trajectories is defined as $\mathcal{M}_{\vec{q}}$.
The safety manifold
\begin{equation}
	\mathcal{M}_{\mathrm{safe}} = \{\vec{q}(t): \vec{h}(\vec{q}(t)) \leq \vec{0} \land \vec{g}(\vec{q}(t)) = \vec{0}\}  \label{eq:safety_manifold}
\end{equation}
consists of all trajectories that satisfy the inequality constraints $\vec{h}(\vec{q}(t)) \leq \vec{0}$ and the equality constraints $\vec{g}(\vec{q}(t)) = \vec{0}$.
Furthermore, the performance manifold $\mathcal{M}_{\mathrm{*}} = \{\vec{q}(t): \vec{q}(t) \in \mathcal{M}_{\mathrm{safe}} \land J(\vec{q}(t)) \text{ is minimized}\}$ is constructed based on the trajectories that are in $\mathcal{M}_{\mathrm{safe}}$ and also minimize the objective function $J(\vec{q}(t))$.
When optimizing a trajectory $\vec{q}_{\mathrm{safe}}(t) \in \mathcal{M}_{\mathrm{safe}}$ to move it onto $\mathcal{M}_{\mathrm{*}}$, it is possible to define the projection operator
\begin{equation}
	\vec{q}_{*}(t) = \mathcal{P}_{\mathrm{safe}}^{*}(\vec{q}_{\mathrm{safe}}(t))
	\label{eq:projection_operator}
\end{equation}
that projects $\vec{q}_{\mathrm{safe}}(t)$ onto $\mathcal{M}_{\mathrm{*}}$. However, such a projection is generally non-trivial for manipulator motion planning as it poses a non-convex optimization problem. Another approach is to use the local properties of $\mathcal{M}_{\mathrm{safe}}$ at $\vec{q}_{\mathrm{safe}}(t)$ to move the trajectory iteratively toward $\mathcal{M}_{*}$.
This iterative procedure is formally defined as
\begin{equation}
	\vec{q}_{\mathrm{safe}}^{+}(t) = \mathcal{P}_{\vec{q}}^{\mathrm{safe}}(\vec{q}_{\mathrm{safe}}(t) + \Delta \vec{q}_{\mathrm{safe}}(t)) \text{~,}
	\label{eq:iterative_projection}
\end{equation}
where $\mathcal{P}_{\vec{q}}^{\mathrm{safe}}$ projects the initial trajectory modified by $\Delta \vec{q}_{\mathrm{safe}}(t)$ onto $\mathcal{M}_{\mathrm{safe}}$. The modification $\Delta \vec{q}_{\mathrm{safe}}(t)$ may be arbitrary, but it is assumed to be small. A possible choice is the gradient of the objective function $J(\vec{q}_{\mathrm{safe}}(t))$ projected onto the tangent space of $\mathcal{M}_{\mathrm{safe}}$ at $\vec{q}_{\mathrm{safe}}(t)$. This choice results in the projection-gradient method. Generally, any choice is viable as the projection operator $\mathcal{P}_{\vec{q}}^{\mathrm{safe}}$ projects $\vec{q}_{\mathrm{safe}}(t) + \Delta \vec{q}_{\mathrm{safe}}$ back onto $\mathcal{M}_{\mathrm{safe}}$.
The projection in~\cref{eq:iterative_projection} is simpler than the projection in~\cref{eq:projection_operator} due to the assumption that $\Delta \vec{q}_{\mathrm{safe}}$ is small. This is valid, if the projection is assumed to be performed by gradient-based non-convex optimization where the problem simplifies if the initial guess $\vec{q}_{\mathrm{safe}}(t)$ is close to the optimal solution $\vec{q}_{\mathrm{safe}}^{+}(t)$.
The components in~\cref{eq:iterative_projection} are described more thoroughly in Sections~\ref{ssec:flow_matching} and~\ref{ssec:trajectory_projection}.

\subsection{Flow Matching on Manifolds}
\label{ssec:flow_matching}

The trajectory modification $\Delta \vec{q}_{\mathrm{safe}}(t)$ in~\cref{eq:iterative_projection} is chosen to be computed by a flow matching model.
Flow matching models are used in motion planning to learn a motion policy from a set of demonstrations $\mathcal{D}$. This is achieved by learning a flow model that transforms samples from an initial distribution over trajectories $p_{0}(\vec{q}(t)|\mathcal{O}(t))$ into a target distribution $p_{1}(\vec{q}(t)|\mathcal{O}(t))$. It is assumed that the demonstrations in $\mathcal{D}$ are samples of $p_{1}(\vec{q}(t)|\mathcal{O}(t))$. The distributions are conditioned on the observation $\mathcal{O}(t)$, which differs from formulations used in similar work. Further, the source distribution $p_{0}(\vec{q}(t)|\mathcal{O}(t))$ may be an arbitrary distribution and is not limited to normal distributions. Sampling from a normal distribution over joint configurations will generally not output trajectories that are on the manifold $\mathcal{M}_{\mathrm{safe}}$.
This work focuses on having safe trajectories during the transfer from the source distribution $p_{0}(\vec{q}(t)|\mathcal{O}(t))$ to the target distribution $p_{1}(\vec{q}(t)|\mathcal{O}(t))$. The source distribution is the distribution over safe trajectories on $\mathcal{M}_{\mathrm{safe}}$ given the current observation $\mathcal{O}(t)$, and the target distribution is the distribution over safe trajectories that also optimize the objective function $J(\vec{q}(t))$ and, thus, lie on $\mathcal{M}_{*}$.

\subsubsection{Mathematical Formulation}
Transforming a sample $\vec{q}_{0}(t)$ from the initial distribution $p_{0}(\vec{q}(t)|\mathcal{O}(t))$ to a sample $\vec{q}_{1}(t)$ of $p_{1}(\vec{q}(t)|\mathcal{O}(t))$ is done using a conditional flow $\vec{u}^{\phi}_{s}(\vec{q}(t)|\mathcal{O}(t))$ with $0 \leq s \leq 1$ such that
\begin{align}
	\label{eq:flow_model}
	\vec{q}_{1}(t) & = \vec{q}_{0}(t) + \int_{0}^{1} \vec{u}^{\phi}_{s}(\vec{q}(t)|\mathcal{O}(t)) \mathrm{d} s \text{~.}
\end{align}
The flow $\vec{u}^{\phi}_{s}$ is parametrized by the parameters $\phi$, which are learned using the loss function
\begin{equation}
	\mathcal{L}_{\mathrm{FM}} = \mathbb{E}_{\substack{s, \vec{q}_{0} \sim p_{0}, \\ \vec{q}_{1} \sim p_{1}}} \lVert \vec{u}^{\phi}_{s}(\vec{q}(t)|\mathcal{O}(t)) - \vec{u}_{s}(\vec{q}(t)|\vec{q}_{0}(t), \vec{q}_{1}(t)) \rVert_{2}^{2} \text{~,}
	\label{eq:flow_loss}
\end{equation}
where the target flow
\begin{equation}
	\vec{u}_{s}(\vec{q}(t)|\vec{q}_{0}(t), \vec{q}_{1}(t)) = \frac{\partial}{\partial s}\mathcal{P}^{\mathrm{safe}}_{\vec{q}}\left(\vec{q}_{0}(t) + s  (\vec{q}_{1}(t)-\vec{q}_{0}(t))\right)
	\label{eq:target_flow}
\end{equation}
is computed using the projection operator from~\cref{eq:iterative_projection} and conditioned on $\vec{q}_{0}(t)$ and $\vec{q}_{1}(t)$ for its tractability.
The projection operator ensures that the path~\cref{eq:flow_model} always remains on $\mathcal{M}_{\mathrm{safe}}$ as long as $\vec{q}_{0}(t) \in \mathcal{M}_{\mathrm{safe}}$. It further ensures that the path ends on $\mathcal{M}_{*}$ as $s \rightarrow 1$ because $\vec{q}_{1}(t) = \mathcal{P}_{\vec{q}}^{\mathrm{safe}}(\vec{q}_{1}(t))$.
For more information on flow matching, the reader is referred to~\cite{lipmanFlowMatchingGuide2024}.

\subsubsection{Training Procedure}
\label{ssec:training_procedure}
Training our model is a two-step procedure. First a model is trained without any safety considerations by trying to match the demonstrations as done in standard flow matching~\cite{lipmanFlowMatchingGuide2024}. Secondly, this model is finetuned on a safety dataset, which is the original dataset, but each demonstration is adapted to be on $\mathcal{M}_{\mathrm{safe}}$. Specifically, the output distribution $p_{1}(\vec{q}(t)|\mathcal{O}(t))$ is computed by projecting each sample in $\mathcal{D}$ onto $\mathcal{M}_{\mathrm{safe}}$. Furthermore, the safe input distribution $p_{0}(\vec{q}(t)|\mathcal{O}(t))$, needed in~\cref{eq:target_flow}, is approximated by sampling and projection. The derivative of the projection operator~\cref{eq:iterative_projection}  with respect to $s$ in~\cref{eq:target_flow} is computed using finite differences.
This way the safety dataset consists of samples of~\cref{eq:target_flow} for different values of $s$ and samples of $\vec{q}_{0}(t)$ that move toward $\vec{q}_{1}(t)$.

\subsection{Trajectory projection}
\label{ssec:trajectory_projection}
Projecting joint-space trajectories $\vec{q}(t)$ onto the safety manifold $\mathcal{M}_{\mathrm{safe}}$ is described as a non-convex optimization problem to enforce the constraints $\vec{h}(\vec{q}(t))$ and $\vec{g}(\vec{q}(t))$.
This assumes that $\vec{h}(\vec{q}(t))$ and $\vec{g}(\vec{q}(t))$ are continuously differentiable to make gradient-based optimization feasible.
The projection operator $\mathcal{P}_{\vec{q}}^{\mathrm{safe}}$ in~\cref{eq:iterative_projection} is defined as
\begin{subequations}
	\label{eq:trajectory_projection}
	\begin{align}
		\mathcal{P}_{\vec{q}}^{\mathrm{safe}}(\vec{q}_{\mathrm{init}}(t)) = & \arg\min_{\vec{q}(t)} D(\vec{q}(t), \vec{q}_{\mathrm{init}}(t))       \\
		\text{s.t.}\quad                                                    & \label{eq:general_h} \vec{h}(\vec{q}(t)) = \begin{bmatrix}
			                                                                                                                 \vec{h}_{t}(\vec{q}(t)) \\
			                                                                                                                 \vec{h}_{T}(\vec{q}(t))
		                                                                                                                 \end{bmatrix} \leq \vec{0} \\
		                                                                    & \label{eq:general_g} \vec{g}(\vec{q}(t)) = \begin{bmatrix}
			                                                                                                                 \vec{g}_{t}(\vec{q}(t)) \\
			                                                                                                                 \vec{g}_{T}(\vec{q}(t))
		                                                                                                                 \end{bmatrix} = \vec{0}
	\end{align}
\end{subequations}
where $D(\cdot)$ is an appropriate distance measure for the trajectories. The constraints~\cref{eq:general_h} and~\cref{eq:general_g} comprise the constraints during the trajectory $\vec{h}_{t}(\vec{q}(t))$ and $\vec{g}_{t}(\vec{q}(t))$, and terminal constraints at the end of the trajectory $\vec{h}_{T}(\vec{q}(t))$ and $\vec{g}_{T}(\vec{q}(t))$ at time $t = t_0 + T$. These terminal constraints define the safety set $\mathcal{S}_{T}$.
\begin{assumption}
	\label{as:term_controller}
	A controller exists that has a control-invariant safety set that encompasses the terminal safety set $\mathcal{S}_{T}$ defined by $\vec{h}_{T}(\vec{q}(t))$ and $\vec{g}_{T}(\vec{q}(t))$.
\end{assumption}
\begin{theorem}
	\label{th:safe_traj}
	The projected trajectory $\vec{q}_{\mathrm{proj}}(t) = \mathcal{P}^{\mathrm{safe}}_{\vec{q}}(\vec{q}_{\mathrm{init}}(t))$ extending from the initial time $t = t_{0}$ to the end time $t = t_{0} + T$ is safe for all times $t > t_{0} + T$. Safety at time $t$ is defined by satisfying the constraints $\vec{h}(\vec{q}(t))$ and $\vec{g}(\vec{q}(t))$.
\end{theorem}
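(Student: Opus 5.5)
The argument splits the time axis at $t_0+T$ and uses Assumption~\ref{as:term_controller} to extend the trajectory. On $[t_0, t_0+T]$ there is nothing new to show: $\vec{q}_{\mathrm{proj}}(t)$ is by definition a minimizer of~\cref{eq:trajectory_projection}, so it is feasible and satisfies $\vec{h}_t(\vec{q}_{\mathrm{proj}}(t)) \leq \vec{0}$ and $\vec{g}_t(\vec{q}_{\mathrm{proj}}(t)) = \vec{0}$. I read the theorem as implicitly assuming that the projection problem is feasible and returns a feasible point, even if only locally optimal, and I would state this explicitly. Feasibility also gives the terminal constraints $\vec{h}_T \leq \vec{0}$ and $\vec{g}_T = \vec{0}$ at $t = t_0 + T$. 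Hence the terminal state $\vec{q}_{\mathrm{proj}}(t_0+T)$, together with whatever derivatives enter the terminal constraints (e.g.\ velocity and acceleration), lies in the terminal safety set $\mathcal{S}_T$.

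For $t > t_0+T$, the trajectory is extended with the controller from Assumption~\ref{as:term_controller}. That assumption provides a controller and a control-invariant safe set $\mathcal{C} \supseteq \mathcal{S}_T$. Starting from $\vec{q}_{\mathrm{proj}}(t_0+T) \in \mathcal{S}_T \subseteq \mathcal{C}$, control invariance keeps the closed-loop state in $\mathcal{C}$ for all $t > t_0+T$. Calling the set "safe" means that every state in $\mathcal{C}$ satisfies the path constraints $\vec{h}_t \leq \vec{0}$ and $\vec{g}_t = \vec{0}$, and I would make this reading of the assumption explicit. The trajectory concatenated from $\vec{q}_{\mathrm{proj}}$ on $[t_0,t_0+T]$ and the controller's continuation afterwards is then safe for all $t \geq t_0$.

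A typical instance makes this concrete. If $\mathcal{S}_T$ requires zero velocity and acceleration at a collision-free configuration within the joint limits, the controller simply holds the configuration. Because the environment constraints are satisfied at that state, invariance is immediate.

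The main obstacle is the gap between the wording of Assumption~\ref{as:term_controller} and what the proof needs. The proof needs the invariant set to lie inside the set where the path constraints hold, and the state at $t_0+T$ to include enough derivative information for the controller to take over continuously. A second gap concerns moving obstacles: if the constraints depend on the observation $\mathcal{O}(t)$, safety beyond $t_0+T$ requires the invariant set to be robust to the environment's evolution. I would handle both by stating these as part of the interpretation of the assumption rather than proving them.
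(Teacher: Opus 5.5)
Your proposal is correct and follows the paper's proof. Feasibility of the projection puts the state at $t = t_0 + T$ in $\mathcal{S}_T$, and the controller from \cref{as:term_controller} then keeps it safe for all $t > t_0 + T$. Your explicit caveats only make visible what the paper leaves implicit: that the projection returns a feasible point, and that the invariant set lies inside the constraint set. The paper's proof glosses over the second point by saying the controller keeps the trajectory in $\mathcal{S}_T$ itself, whereas the assumption only provides an invariant superset of $\mathcal{S}_T$.
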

\begin{proof}
	The trajectory $\vec{q}_{\mathrm{proj}}(t)$ will end in the terminal safety set $\mathcal{S}_{T}$ defined by $\vec{h}_{T}(\vec{q}(t))$ and $\vec{g}_{T}(\vec{q}(t))$ at time $t = t_{0}+T$. At time $t = t_{0}+T$ the controller from~\cref{as:term_controller} is employed to keep the trajectory in the terminal safety set $\mathcal{S}_{T}$ for all times $t>t_{0}+T$.
\end{proof}

\subsection{Inference}

After training the model on the dataset $\mathcal{D}$ with the loss~\cref{eq:flow_loss}, the model is used for inference. Similar to the work~\cite{kimOpenVLAOpenSourceVisionLanguageAction2025, romerDiffusionPredictiveControl2025}, we use the model in closed-loop, where the model predicts the trajectory $\vec{q}^{i}(t)$ at time step $i$ for the time span $T = N T_{\mathrm{s}}$, with the prediction horizon $N$ and the sampling time $T_{\mathrm{s}}$. The trajectory $\vec{q}^{i}(t)$ is executed for the time $T_{\mathrm{s}}$ only, and then a new trajectory $\vec{q}^{i+1}(t)$ is computed at time step $i+1$. A step-by-step explanation is given in~\cref{alg:safe_flow_mpc}.
The prediction step is executed with
\begin{equation}
	\begin{aligned}
		\vec{q}^{i}_{s + \Delta s}(t) = \mathcal{P}^{\mathrm{safe}}_{\vec{q}}           \Bigl( & \vec{q}^{i}_{s}(t) + \Delta s \vec{u}_{s}^{\phi}(\vec{q}^{i}_{s}(t) | \mathcal{O}(t)) \\
		                                                                                       & + w_{\mathrm{guide}} \nabla_{\vec{q}(t)}J_{\mathrm{guide}}(\vec{q}(t))\Bigl)
	\end{aligned}
	\label{eq:flow_step}
\end{equation}
for a fixed flow step $\Delta s$ to get from $\vec{q}^{i}_{0}(t)$ to $\vec{q}^{i}_{1}(t)$, where the subscript denotes the probability flow time with bounds $0\leq s \leq 1$. This is visualized in~\cref{fig:method_overview}. The derivative of the guidance cost term $J_{\mathrm{guide}}(\vec{q}(t))$ is added with the weight $w_{\mathrm{guide}} > 0$, similar to diffusion guidance~\cite{sahaEDMPEnsembleofcostsguidedDiffusion2024}. Its particular design is application dependent and will be explained in the experiments section.
The trajectory for the next step $i+1$ is then defined by
\begin{equation}
	\vec{q}_{0}^{i+1}(t) =
	\begin{cases}
		\vec{q}_{1}^{i}(t) & t_{0} + T_{\mathrm{s}} < t < t_{0} + T \text{~,} \\
		\vec{q}^{i}_{T}(t) & t_{0} + T \leq t \leq t_{0} + T + T_{\mathrm{s}}
	\end{cases}
	\text{~,}
	\label{eq:next_traj}
\end{equation}
where $t_{0}$ is the starting time of $\vec{q}_{1}^{i}(t)$. This sets the initial trajectory for~\cref{eq:flow_step} at step $i+1$ to the optimized trajectory at step $i$ until it ends at $t_{0} + T$ and afterward applies the terminal trajectory $\vec{q}^{i}_{T}(t)$ according to \cref{as:term_controller}, which keeps the robot in the terminal safety set $\mathcal{S}_{T}$. A visual explanation of the iterative planning is provided in~\cref{fig:planning_overview}.
\begin{assumption}
	\label{as:safe_start}
	There exists a safe trajectory $\vec{q}^{0}(t)$ at the start of the robot movement such that the robot stays on the safety manifold $\mathcal{M}_{\mathrm{safe}}$.
\end{assumption}
This assumption is not very restrictive as it is often possible to stay in the safe initial state indefinitely.
\begin{theorem}
	\label{th:anytime_feasibility}
	The iterative trajectory generation~\cref{eq:next_traj} will keep the robot safe at all times.
\end{theorem}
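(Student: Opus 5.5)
The plan is an induction over the MPC time step $i$, in the style of recursive feasibility arguments for MPC with terminal constraints. The invariant will be: at the start of step $i$ there is an initial trajectory $\vec{q}^{i}_{0}(t)$ on $\mathcal{M}_{\mathrm{safe}}$, defined on $[t_0^{i}, t_0^{i}+T]$. This trajectory satisfies $\vec{h}_t,\vec{g}_t$ along the horizon and ends in $\mathcal{S}_T$. By Theorem~\ref{th:safe_traj}, such a trajectory already certifies safety for all future times, because it can be continued by the controller of Assumption~\ref{as:term_controller}. Safety at all times then follows from two facts: the executed pieces, each of length $T_{\mathrm{s}}$, are always taken from trajectories on $\mathcal{M}_{\mathrm{safe}}$, and the invariant is never lost.

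\textbf{Base case.} Assumption~\ref{as:safe_start} provides a safe $\vec{q}^{0}(t)$, which I take as $\vec{q}^{0}_{0}(t)$.

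\textbf{Inner flow loop.} I argue that every intermediate iterate $\vec{q}^{i}_{s}(t)$ produced by \cref{eq:flow_step} lies on $\mathcal{M}_{\mathrm{safe}}$. This holds because each update is an output of the projection $\mathcal{P}^{\mathrm{safe}}_{\vec{q}}$ from \cref{eq:trajectory_projection}, so it satisfies both the path constraints and the terminal constraints, whatever the flow term and guidance term were. The delicate point is existence: the projection problem must be feasible. I will handle this by noting that the previous iterate is itself a feasible point of the projection problem, since it already satisfies all constraints and the initial state is unchanged. I will also use that the perturbation $\Delta s\,\vec{u}^{\phi}_s + w_{\mathrm{guide}}\nabla J_{\mathrm{guide}}$ only changes the cost $D$, not the feasible set. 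If the solver fails or returns an infeasible point, which is possible because it is suboptimal and runs in real time, the fallback is to keep the previous iterate. This fallback preserves the invariant, so $\vec{q}^{i}_{1}(t)\in\mathcal{M}_{\mathrm{safe}}$.

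\textbf{Shift step.} This is the main obstacle. I must show that $\vec{q}^{i+1}_{0}(t)$ built by \cref{eq:next_traj} is again safe on the new horizon $[t_0+T_{\mathrm{s}}, t_0+T+T_{\mathrm{s}}]$.

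On $(t_0+T_{\mathrm{s}}, t_0+T)$ it coincides with $\vec{q}^{i}_{1}(t)$, so it satisfies $\vec{h}_t,\vec{g}_t$. It also starts at the state actually reached after executing $\vec{q}^{i}_{1}$ for $T_{\mathrm{s}}$, which is consistent with the dynamics.

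On $[t_0+T, t_0+T+T_{\mathrm{s}}]$ it follows the terminal trajectory $\vec{q}^{i}_{T}(t)$ generated by the controller of Assumption~\ref{as:term_controller}. This controller starts in $\mathcal{S}_T$, and its control-invariant set contains $\mathcal{S}_T$. So the state remains safe, and the path constraints hold on this tail; here I need that the invariant set is contained in the set where $\vec{h}_t\le 0$ and $\vec{g}_t=0$.

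For the new terminal state at $t_0+T+T_{\mathrm{s}}$, I need it to be in $\mathcal{S}_T$ again, i.e., that $\mathcal{S}_T$ is itself invariant under the terminal controller. I will read Assumption~\ref{as:term_controller} as giving exactly this, possibly by taking $\mathcal{S}_T$ to be the invariant set. I will also check continuity at the junction $t=t_0+T$, which holds because the terminal controller is started from $\vec{q}^{i}_{1}(t_0+T)$.

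With this the invariant holds at step $i+1$. Every executed segment belongs to a trajectory on $\mathcal{M}_{\mathrm{safe}}$, so induction over $i$ gives safety for all times.
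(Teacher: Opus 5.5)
Your proof is sound and shares the paper's skeleton: induction from \cref{as:safe_start}, every accepted iterate of \cref{eq:flow_step} lies on $\mathcal{M}_{\mathrm{safe}}$ because it is a projection output, and on solver failure the last safe iterate is kept and certified by \cref{th:safe_traj}. That last point is exactly the paper's fallback of executing $\vec{q}^{i}_{s_{\mathrm{fail}}}(t)$.

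The difference is that you also carry out the shift step, which the paper's proof skips. You show, in classical MPC fashion, that $\vec{q}^{i+1}_{0}(t)$ from \cref{eq:next_traj} is again feasible for the projection, so you get recursive feasibility rather than only safety. For this you need $\mathcal{S}_T$ itself to be invariant under the terminal controller. That is stronger than the literal \cref{as:term_controller}, where the invariant set merely \emph{contains} $\mathcal{S}_T$. It does, however, match the paper's remark after \cref{eq:next_traj}. It also holds for the manipulator, where $\mathcal{S}_T$ consists of rest states and the terminal controller just holds position.

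The paper's route avoids this strengthening because it only tracks safety. The executed motion is always a prefix of some accepted plan ending in $\mathcal{S}_T$, followed by the terminal controller. This survives even if $\vec{q}^{i+1}_{0}(t)$ is infeasible, since the fallback then simply keeps executing it. If you weaken your invariant to that statement, your argument works under the literal assumption as well.

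Both arguments tacitly treat the constraints as unchanged between steps $i$ and $i+1$. With moving obstacles ($\mathcal{M}^{i}_{\mathrm{safe}}$ versus $\mathcal{M}^{i+1}_{\mathrm{safe}}$ in \cref{fig:planning_overview}), neither the shifted candidate nor the fallback plan is certified.
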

\begin{proof}
	At the start of the movement, the robot is in a safe state according to~\cref{as:safe_start}. After moving from step $i$ to step $i+1$, the robot is safe because the projection in~\cref{eq:flow_step} keeps the new trajectory $\vec{q}^{i}_{s}(t)$ on the safety manifold $\mathcal{M}_{\mathrm{safe}}$. This proof by induction works as long as the projection in~\cref{eq:flow_step} converges. However, the projection operator $\mathcal{P}_{\vec{q}}^{\mathrm{safe}}(\cdot)$ introduced in~\cref{eq:trajectory_projection} is a non-convex optimization problem where convergence cannot be ensured. In case of a failure in~\cref{eq:flow_step} at $s = s_{\mathrm{fail}}$, the current trajectory $\vec{q}^{i}_{s_{\mathrm{fail}}}(t)$ is executed, which keeps the system safe according to~\cref{th:safe_traj} at all times due to enforcing the terminal safety set $\mathcal{S}_{T}$.

\end{proof}
\begin{algorithm}
	\footnotesize
	\caption{SafeFlowMPC Inference}
	\label{alg:safe_flow_mpc}
	\begin{algorithmic}[1]
		\Require Initial safe trajectory $\mathbf{q}_0^0(t)$, flow model $\mathbf{u}_s^\phi$, guidance cost $J_{\text{guide}}$, projection operator $\mathcal{P}_\mathbf{q}^\text{safe}$, weight for cost guidance $w_{\mathrm{guide}}$
		\Require Number of flow steps $N_s$, prediction horizon $N$, sampling time $T_s$

		\State Initialize $i \gets 0$, $t_0 \gets 0$
		\While{task not completed}
		\State $\mathbf{q}_s^i(t) \gets \mathbf{q}_0^i(t)$ \Comment{Initialize flow trajectory}
		\State $\Delta s \gets 1/N_s$ \Comment{Flow step size}
		\State Retrieve observation $\mathcal{O}(t)$

		\For{$k = 1$ to $N_s$} \Comment{Safe flow matching steps}
		\State Compute $\vec{q}^{i}_{s + \Delta s}(t)$ from $\vec{q}^{i}_{s}(t)$ using~\cref{eq:flow_step}
		\State $s \gets s + \Delta s$
		\EndFor

		\State $\mathbf{q}_1^i(t) \gets \mathbf{q}_{s}^i(t)$ \Comment{Final flow trajectory}

		\State Execute the trajectory $\mathbf{q}_1^i(t)$ for one time step $T_{\mathrm{s}}$

		\State Update the initial trajectory using~\cref{eq:next_traj}
		\State $i \gets i + 1$
		\State $t_0 \gets t_0 + T_s$
		\EndWhile

	\end{algorithmic}
\end{algorithm}

\begin{figure}[t]
	\centering
	\def\svgwidth{0.8\linewidth}
	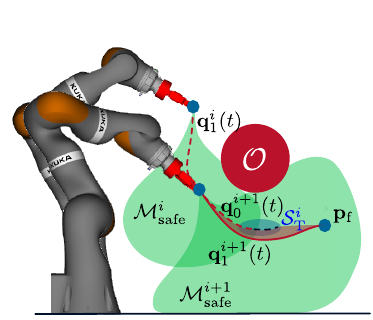
	\caption{Planning scheme of SafeFlowMPC for the trajectory planning of a robot manipulator around an obstacle $\mathcal{O}$. At step $i$, the robot plans the trajectory $\vec{q}_{1}^{i}(t)$ on the safety manifold $\mathcal{M}^{i}_{\mathrm{safe}}$. This trajectory ends in the terminal safety set $\mathcal{S}_{T}^{i}$ indicated by the blue shaded area. At step $i+1$ the planner reuses the previous trajectory according to~\cref{eq:next_traj} and transfers it (red shaded area) to $\vec{q}_{1}^{i+1}(t)$ on $\mathcal{M}^{i+1}_{\mathrm{safe}}$.}
	\label{fig:planning_overview}
\end{figure}

\section{Practical Implementation: Robot Manipulator}
\label{sec:implementation_manipulator}
This section discusses the implementation details of using the safe trajectory generation described in~\cref{sec:formulation} for the motion planning of a robot manipulator, for which
$\vec{q}(t)$ denotes the joint-position trajectory. For shorter notation, we use the state $\vec{x}^{\mathrm{T}}(t) = [ \vec{q}^{\mathrm{T}}(t), \dot{\vec{q}}^{\mathrm{T}}(t), \ddot{\vec{q}}^{\mathrm{T}}(t), \dddot{\vec{q}}^{\mathrm{T}}(t) ]$. The constraints of the safety manifold $\mathcal{M}_{\mathrm{safe}}$ in~\cref{eq:safety_manifold} are given by
\begin{equation}
	\vec{h}(\vec{q}(t)) = \begin{cases}
		\underline{\vec{x}} \leq \vec{x}(t) \leq \overline{\vec{x}}          & \forall \quad t_{0} \leq t \leq t_{0} + T                                  \\
		\vec{f}_{\mathrm{fk}, i}(\vec{q}(t)) \in \mathcal{S}_{\mathrm{free}} & \forall \begin{aligned} \quad t_{0} \leq t \leq t_{0} + T \\
			                                                                               \quad\quad i = 1, \ldots, N_{\mathrm{fk}}\end{aligned}
	\end{cases}
	\label{eq:robot_h}
\end{equation}
and
\begin{equation}
	\vec{g}(\vec{q}(t)) = \begin{cases}
		\vec{x}(t_{0}) = \vec{x}_{0}                                                                                                                 \\
		[\dot{\vec{q}}^{\mathrm{T}}(t_{0} + T), \ddot{\vec{q}}^{\mathrm{T}}(t_{0} + T), \dddot{\vec{q}}^{\mathrm{T}}(t_{0} + T)] = \vec{0} \text{~,} \\
	\end{cases}
	\label{eq:robot_g}
\end{equation}
where the initial state $\vec{x}_{0}$ is taken from the previous trajectory according to~\cref{eq:next_traj}, which ensures continuity across time steps. The state $\vec{x}$ is kinematically bounded by the lower bound $\underline{x}$ and the upper bound $\overline{x}$.
The second constraint in~\cref{eq:robot_g} ensures containment in the terminal set $\mathcal{S}_{T}$ and, thus, safety of the robot for $t \geq t_{0} + T$.
The constraint $\vec{f}_{\mathrm{fk}, i}(\vec{q}(t)) \in \mathcal{S}_{\mathrm{free}}$ in~\cref{eq:robot_h} uses the forward kinematics $\vec{f}_{\mathrm{fk}, i}(\vec{q}(t))$ of the manipulator to ensure that the kinematic chain is collision free. In this work, the collision formulation from~\cite{oelerichBoundPlannerConvexSetBasedApproach2025} is utilized, which uses collision-free convex sets around $N_{\mathrm{fk}}$ key points of the manipulator. This approach scales well with the number of obstacles and is real-time capable. For more information on this approach, the reader is referred to~\cite{oelerichBoundPlannerConvexSetBasedApproach2025}.
The forward kinematics $\vec{f}_{\mathrm{fk}, i}(\vec{q}(t))$ are nonlinear, rendering the problem~\cref{eq:trajectory_projection} non-convex. This implies that~\cref{eq:trajectory_projection} has multiple (local) minima, making it difficult to reliably solve. Employing a non-convex solver for this projection and solving it optimally is computationally infeasible as the projection is performed multiple times to get from $\vec{q}^{i}_{0}(t)$ to $\vec{q}^{i}_{1}(t)$ using~\cref{eq:flow_step}. Therefore, this work adapts a suboptimal approach using the real-time-iteration (RTI) scheme~\cite{diehlRealTimeIterationScheme2005} in the \emph{acados} framework~\cite{verschuerenAcadosModularOpensource2022}, which solves exactly one QP problem based on~\cref{eq:trajectory_projection} for each step~\cref{eq:flow_step}. In practice, a suboptimal solution is often sufficient and will converge to the optimal solution over multiple steps using~\cref{eq:flow_step}.
In this work, $N_{s} = 7$ steps of~\cref{eq:flow_step} are used in each time step.

\section{Experiments}
\label{sec:experiments}
Three experiments are presented in this section to evaluate performance, versatility, and adherence to safety constraints of the developed method. The first experiment utilizes our method to learn from a global trajectory planner to achieve fast local planning in a global context. The second experiment focuses on reactive online replanning in an obstructed environment. The third experiment is a dynamic human-robot object handover where the robot learns behavior from a human-human handover dataset~\cite{kimLearningbasedDynamicRobottoHuman2025}.

We compare our method to the following formulations:
\begin{enumerate}
	\item \textit{VP-STO}: VP-STO~\cite{jankowskiVPSTOViapointbasedStochastic2023} is a global trajectory planner based on stochastic sampling.
	\item \textit{BoundMPC}: BoundMPC~\cite{oelerichBoundMPCCartesianPath2025} is a model-predictive trajectory planner with a path-following-based formulation. The path is computed by BoundPlanner~\cite{oelerichBoundPlannerConvexSetBasedApproach2025}.
	\item \textit{BC}: Behavior cloning~\cite{pomerleauALVINNAutonomousLand1988} using a multi-layer perceptron, which takes the last trajectory and the current state as input and outputs the next trajectory. The training only includes safe trajectories to incentivize safe behavior.
	\item \textit{FM}: The baseline flow-matching training procedure without any safety considerations~\cite{lipmanFlowMatchingGuide2024}.
	\item \textit{Ours with NL-Opt}: Our proposed method, but the projection~\cref{eq:trajectory_projection} is solved to optimality instead of using the real-time iteration approach described in~\cref{sec:implementation_manipulator}. The IPOPT~\cite{wachterImplementationInteriorpointFilter2006} solver with the MA57 linear solver~\cite{duffMA57aCodeSolution2004} is used in this work. To increase the speed, only 4 steps of~\cref{eq:flow_step} are used in each time step.
	\item \textit{Ours w/o finetuning}: Our proposed method without the finetuning on the safety dataset described in~\cref{ssec:training_procedure}. The training simplifies to the standard flow matching training procedure.
\end{enumerate}

The flow network architecture for all flow matching models is a temporal U-Net network as in~\cite{jannerPlanningDiffusionFlexible2022}.
All online planners are executed at \SI{10}{\hertz}. A video of the experiments is available at \href{https://www.acin.tuwien.ac.at/en/42d6}{https://www.acin.tuwien.ac.at/en/42d6}.

\subsection{Experiment 1: Global trajectory planning made local}
\label{sec:global_local_planning}
By learning from global trajectory planners, SafeFlowMPC is able to exhibit close-to-optimal behavior while being real-time executable, thus, combining the advantages of global and local planners.
To learn this behavior, a dataset $\mathcal{D}$ was created using the global planner VP-STO~\cite{jankowskiVPSTOViapointbasedStochastic2023} in the environment depicted in~\cref{fig:scen_global_planner} with two obstacles forming a narrow passage. The goal is to plan from an initial joint configuration $\vec{q}_{\mathrm{init}}(0)$, where the end-effector is at pose $\vec{p}_{0}$, to a final end-effector pose $\vec{p}_{\mathrm{f}}$, where an object is picked up.
The dataset consists of 4100 trajectories, i.e., 4000 for training and 100 for evaluation. The guidance function
\begin{equation}
	J_{\mathrm{guide}}(\vec{q}(t)) = \sum_{j=1}^{N_{s}} D_{\mathrm{pose}}\left(\vec{f}_{\mathrm{fk, ee}}(\vec{q}(t_{0} + j T_{\mathrm{s}})), \vec{p}_{\mathrm{f}}\right)
	\label{eq:exp1_guidance}
\end{equation}
used in~\cref{eq:flow_step} is defined by the distance between the current end-effector pose $\vec{f}_{\mathrm{fk, ee}}(\vec{q}(t))$ and the final pose $\vec{p}_{\mathrm{f}}$, where the function $D_{\mathrm{pose}}$ measures the distance between two poses. It is the sum of the Cartesian distance for the position and the rotation vector angle difference between two orientations.
Equation~\cref{eq:exp1_guidance} ensures convergence to the final pose $\vec{p}_{\mathrm{f}}$.
The weight
\begin{equation}
	w_{\mathrm{guide}}(t) = \exp{\left(-\alpha \left(\frac{\lVert \vec{f}_{\mathrm{fk, ee}}(\vec{q}(t)) - \vec{p}_{0}\rVert_{2}}{\lVert \vec{p}_{0} - \vec{p}_{\mathrm{f}}\rVert_{2}} - \beta\right)\right)}
\end{equation}
is chosen such that it only influences the behavior close to $\vec{p}_{\mathrm{f}}$ with the parameters $\alpha>0$ and $\beta>0$.
The observation $\mathcal{O}(t)$ consists of the joint positions of the previous $10$ time steps, the Cartesian positions of the collision-checked points on the robot, the current pose of the end effector, and the desired pose of the end effector $\vec{p}_{\mathrm{f}}$.

\begin{figure}
	\centering
	\addtolength\abovecaptionskip{-15pt}
	\def\axisdefaultwidth{\linewidth}
	\def\axisdefaultheight{0.7\linewidth}
	\begin{tikzpicture}
		\definecolor{darkgray176}{RGB}{176,176,176}
		\definecolor{acinblue}{RGB}{0,102,153}
		\definecolor{acinyellow}{RGB}{252, 204, 71}
		\definecolor{acingreen}{RGB}{0,190,65}
		\definecolor{acinred}{RGB}{186,18,43}
		\definecolor{lightgray204}{RGB}{204,204,204}

		\begin{axis}[
				view={80}{10},
				xlabel={$x$ / \si{\meter}},
				ylabel={$y$ / \si{\meter}},
				zlabel={$z$ / \si{\meter}},
				zmin=0,
				xmin=-1,
				xmax=1,
				axis equal,
				tick align=outside,
				tick pos=left,
				x grid style={darkgray176},
				y grid style={darkgray176},
				xtick style={color=black},
				ytick style={color=black},
				legend cell align={left},
				legend style={
						fill opacity=0.8,
						draw opacity=1,
						text opacity=1,
						draw=lightgray204,
					},
				legend entries ={SafeFlowMPC, BoundPlanner, VP-STO}
			]

			\node[inner sep=0pt, opacity=0.5] (qf) at (axis cs:0, 0.135, 0.36)
			{\includegraphics[width=.415\textwidth]{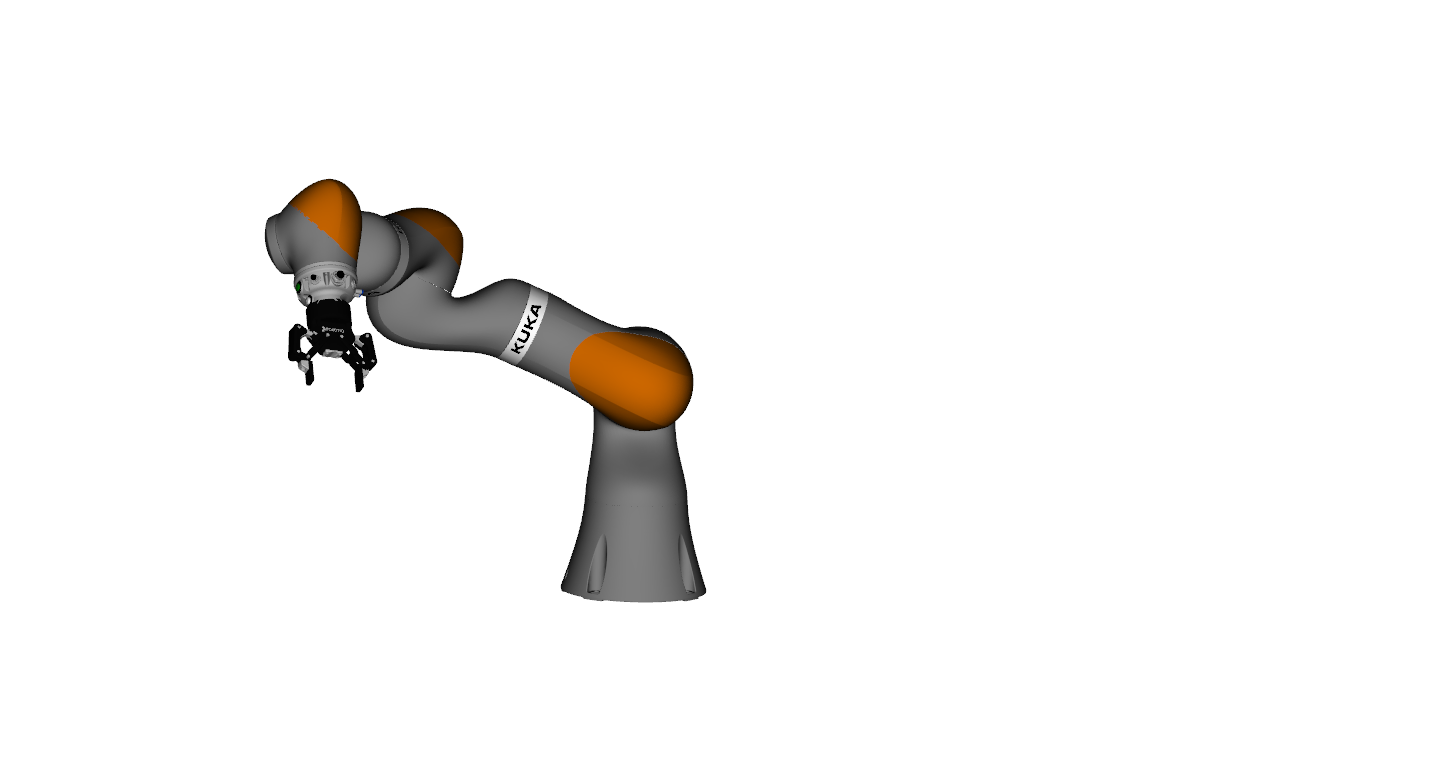}};
			\node[inner sep=0pt, opacity=0.5] (qf) at (axis cs:0, 0.135, 0.36)
			{\includegraphics[width=.415\textwidth]{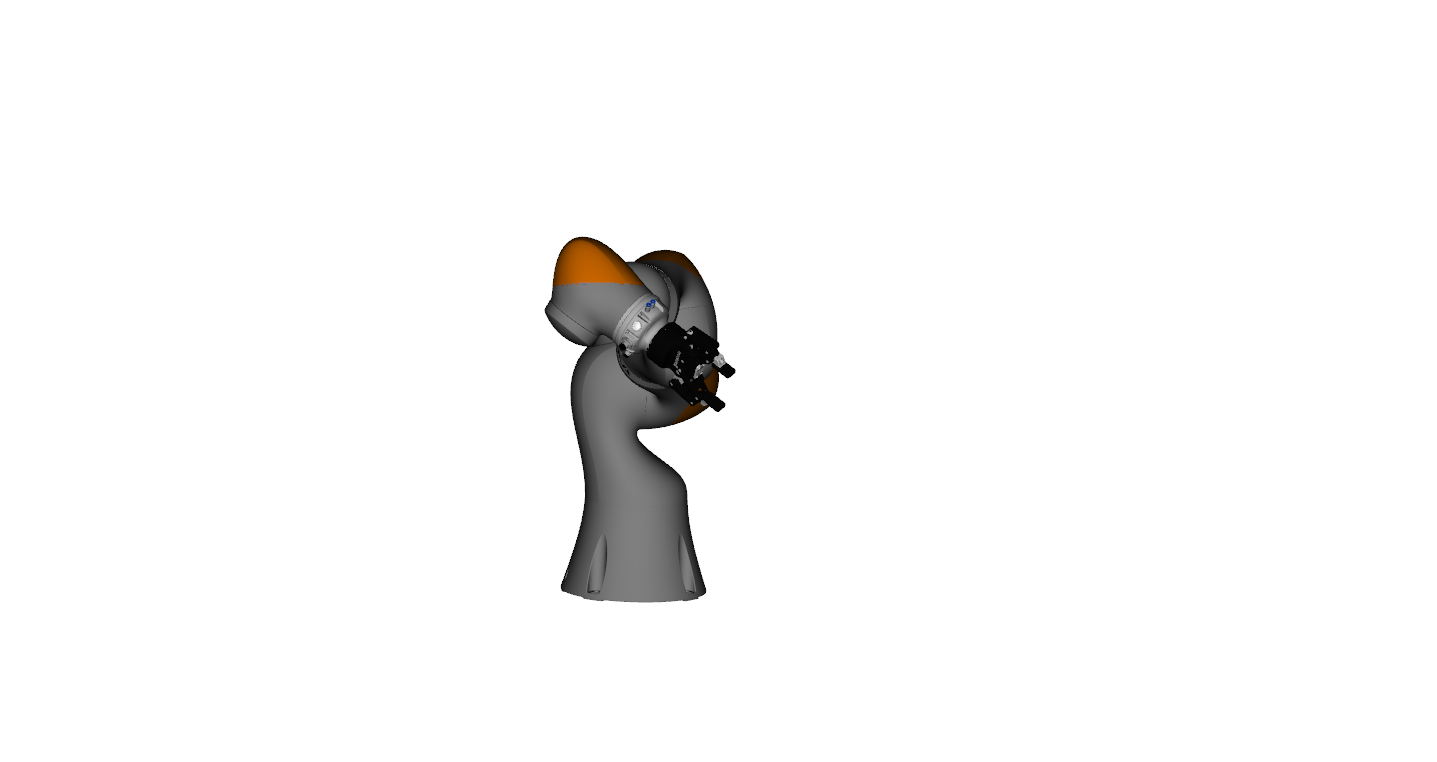}};
			\node[inner sep=0pt, opacity=0.5] (qf) at (axis cs:0, 0.135, 0.36)
			{\includegraphics[width=.415\textwidth]{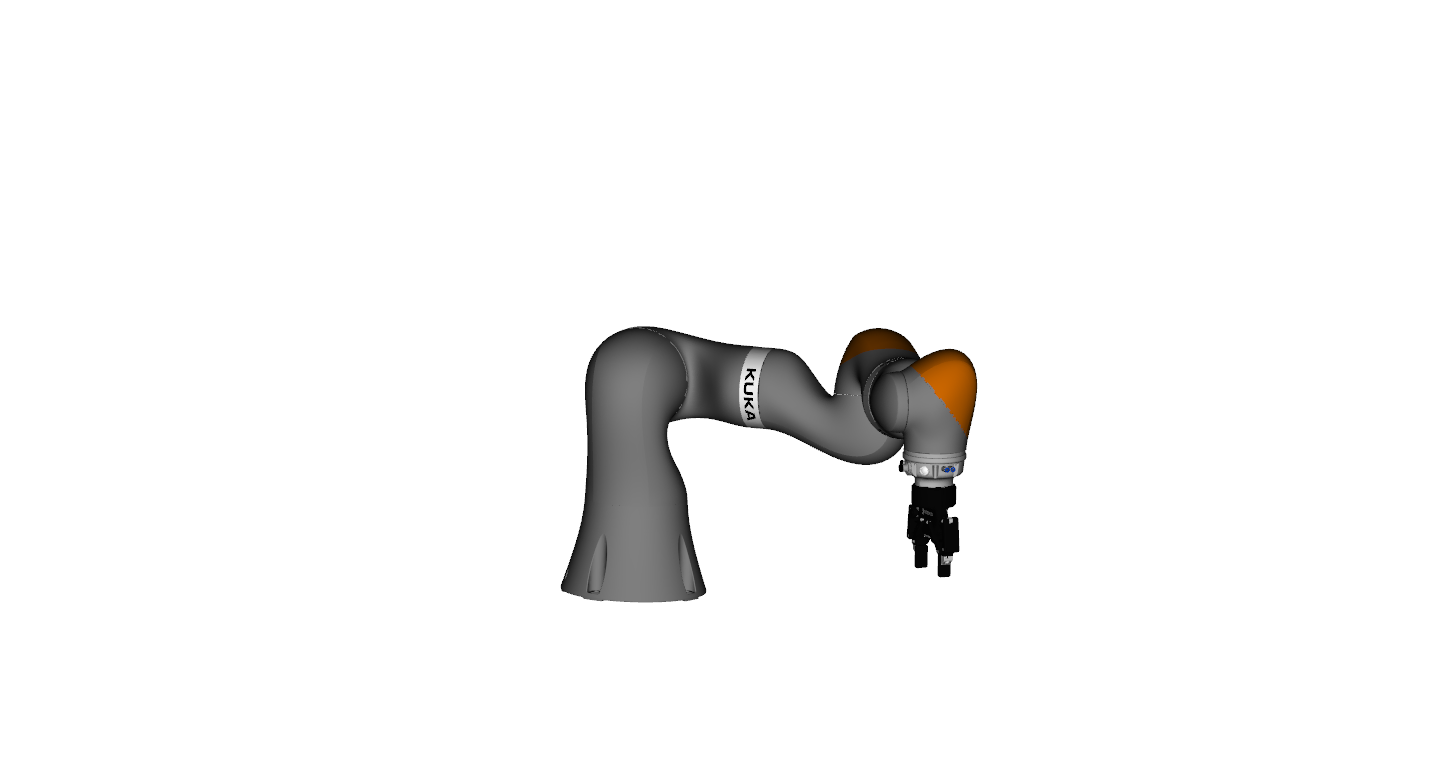}};

			\pgfplotstableread{data/p_BoundMPC.txt}\pbmpc;
			\pgfplotstableread{data/r_BoundMPC.txt}\rbmpc;
			\pgfplotstableread{data/p_SafeFlowMPC.txt}\psfmpc;
			\pgfplotstableread{data/r_SafeFlowMPC.txt}\rsfmpc;
			\pgfplotstableread{data/p_VPSTO.txt}\pgp;
			\pgfplotstableread{data/r_VPSTO.txt}\rgp;
			\addplot3 [acinred, thick]
			table [
					x expr=\thisrowno{0},
					y expr=\thisrowno{1},
					z expr=\thisrowno{2}
				] {\pgp};
			\addplot3 [acinblue, thick]
			table [
					x expr=\thisrowno{0},
					y expr=\thisrowno{1},
					z expr=\thisrowno{2}
				] {\psfmpc};
			\addplot3 [acinyellow, thick]
			table [
					x expr=\thisrowno{0},
					y expr=\thisrowno{1},
					z expr=\thisrowno{2}
				] {\pbmpc};
			%

			\addplot3 [patch,patch type=triangle, patch refines=0, shader=flat, opacity=0.15, color=acinred] coordinates
				{
					(1.1,0.05,0.7300000000000001) (0.55,-0.05,0.7300000000000001) (1.1,-0.05,0.7300000000000001)
					(1.1,0.05,0.7300000000000001) (0.55,-0.05,0.7300000000000001) (0.55,0.05,0.7300000000000001)
					(0.55,0.05,0.8699999999999999) (0.55,-0.05,0.7300000000000001) (0.55,0.05,0.7300000000000001)
					(0.55,0.05,0.8699999999999999) (0.55,-0.05,0.8699999999999999) (0.55,-0.05,0.7300000000000001)
					(0.55,0.05,0.8699999999999999) (1.1,0.05,0.7300000000000001) (0.55,0.05,0.7300000000000001)
					(0.55,0.05,0.8699999999999999) (1.1,0.05,0.7300000000000001) (1.1,0.05,0.8699999999999999)
					(1.1,-0.05,0.8699999999999999) (0.55,-0.05,0.7300000000000001) (1.1,-0.05,0.7300000000000001)
					(1.1,-0.05,0.8699999999999999) (0.55,-0.05,0.8699999999999999) (0.55,-0.05,0.7300000000000001)
					(1.1,-0.05,0.8699999999999999) (1.1,0.05,0.7300000000000001) (1.1,-0.05,0.7300000000000001)
					(1.1,-0.05,0.8699999999999999) (1.1,0.05,0.7300000000000001) (1.1,0.05,0.8699999999999999)
					(1.1,-0.05,0.8699999999999999) (0.55,0.05,0.8699999999999999) (0.55,-0.05,0.8699999999999999)
					(1.1,-0.05,0.8699999999999999) (0.55,0.05,0.8699999999999999) (1.1,0.05,0.8699999999999999)
				};
			\addplot3 [patch,patch type=triangle, patch refines=0, shader=flat, opacity=0.15, color=acinred] coordinates
				{
					(1.1,0.05,0.0) (0.55,-0.05,0.0) (1.1,-0.05,0.0)
					(1.1,0.05,0.0) (0.55,-0.05,0.0) (0.55,0.05,0.0)
					(0.55,0.05,0.35) (0.55,-0.05,0.0) (0.55,0.05,0.0)
					(0.55,0.05,0.35) (0.55,-0.05,0.35) (0.55,-0.05,0.0)
					(0.55,0.05,0.35) (1.1,0.05,0.0) (0.55,0.05,0.0)
					(0.55,0.05,0.35) (1.1,0.05,0.0) (1.1,0.05,0.35)
					(1.1,-0.05,0.35) (0.55,-0.05,0.0) (1.1,-0.05,0.0)
					(1.1,-0.05,0.35) (0.55,-0.05,0.35) (0.55,-0.05,0.0)
					(1.1,-0.05,0.35) (1.1,0.05,0.0) (1.1,-0.05,0.0)
					(1.1,-0.05,0.35) (1.1,0.05,0.0) (1.1,0.05,0.35)
					(1.1,-0.05,0.35) (0.55,0.05,0.35) (0.55,-0.05,0.35)
					(1.1,-0.05,0.35) (0.55,0.05,0.35) (1.1,0.05,0.35)
				};
			\addplot3 [patch,patch type=triangle, patch refines=0, shader=flat, opacity=0.15, color=acingreen] coordinates
				{
					(0.6643815288212902,0.2947798213329385,0.19998135338542958) (0.615362408348182,0.4886830882902628,0.09998810915703432) (0.6643857147508129,0.2947843637361393,0.09998135357620673)
					(0.6643815288212902,0.2947798213329385,0.19998135338542958) (0.615362408348182,0.4886830882902628,0.09998810915703432) (0.6153582224186596,0.48867854588706217,0.19998810896625718)
					(0.6643815288212902,0.2947798213329385,0.19998135338542958) (0.6159068476985213,0.2825239948263581,0.19997876756038113) (0.5668835412958906,0.4764227193804817,0.19998552314120874)
					(0.6643815288212902,0.2947798213329385,0.19998135338542958) (0.6153582224186596,0.48867854588706217,0.19998810896625718) (0.5668835412958906,0.4764227193804817,0.19998552314120874)
					(0.615911033628044,0.28252853722955884,0.09997876775115827) (0.6643815288212902,0.2947798213329385,0.19998135338542958) (0.6159068476985213,0.2825239948263581,0.19997876756038113)
					(0.615911033628044,0.28252853722955884,0.09997876775115827) (0.6643815288212902,0.2947798213329385,0.19998135338542958) (0.6643857147508129,0.2947843637361393,0.09998135357620673)
					(0.5668877272254134,0.4764272617836825,0.09998552333198586) (0.6153582224186596,0.48867854588706217,0.19998810896625718) (0.5668835412958906,0.4764227193804817,0.19998552314120874)
					(0.5668877272254134,0.4764272617836825,0.09998552333198586) (0.615362408348182,0.4886830882902628,0.09998810915703432) (0.6153582224186596,0.48867854588706217,0.19998810896625718)
					(0.5668877272254134,0.4764272617836825,0.09998552333198586) (0.6159068476985213,0.2825239948263581,0.19997876756038113) (0.5668835412958906,0.4764227193804817,0.19998552314120874)
					(0.5668877272254134,0.4764272617836825,0.09998552333198586) (0.615911033628044,0.28252853722955884,0.09997876775115827) (0.6159068476985213,0.2825239948263581,0.19997876756038113)
					(0.5668877272254134,0.4764272617836825,0.09998552333198586) (0.615362408348182,0.4886830882902628,0.09998810915703432) (0.6643857147508129,0.2947843637361393,0.09998135357620673)
					(0.5668877272254134,0.4764272617836825,0.09998552333198586) (0.615911033628044,0.28252853722955884,0.09997876775115827) (0.6643857147508129,0.2947843637361393,0.09998135357620673)
				};
			\addplot3[
				only marks,
				mark=*,
				mark size=1pt,
			] coordinates {(0, 0, 0)};
			\addplot3[
				only marks,
				mark=*,
				mark size=1pt,
			] coordinates {(0.61563463, 0.38560354, 0.14998344)};
			\draw (axis cs:0.61563463, 0.38560354, 0.14998344) node[
				scale=1.0,
				anchor=base west,
				text=black,
				rotate=0.0
			]{\bfseries $\vec{p}_\mathrm{f}$};
			\addplot3[
				only marks,
				mark=*,
				mark size=1pt,
			] coordinates {(0.45932693, -0.59999839 , 0.4454559 )};
			\draw (axis cs:0.45932693, -0.59999839 , 0.4054559 ) node[
				scale=1.0,
				anchor=base west,
				text=black,
				rotate=0.0
			]{\bfseries $\vec{p}_{0}$};

		\end{axis}
	\end{tikzpicture}
	\caption{Experiment 1: Environment with red obstacles and green object to grasp. An example trajectory for three planners is shown. The robot is visualized for the start and end configuration of the SafeFlowMPC trajectory.}
	\label{fig:scen_global_planner}
\end{figure}

The results of all planners described above are in~\cref{tab:exp1_comp} for the 100 evaluation trajectories in terms of average trajectory duration $T_{\mathrm{traj}}$, planning time $T_{\mathrm{plan}}$, success rate $r_{\mathrm{success}}$ for reaching the final pose without collisions, and the maximum obstacle collision $c_{\mathrm{obs}}$. The latter is the maximum obstacle penetration depth in case of a collision and zero otherwise. SafeFlowMPC shows low trajectory times $T_{\mathrm{traj}}$ comparable to the global planner VP-STO it learned from. Additionally, the robot moves through the environment without violating the constraints. The success rate $r_{\mathrm{success}}$ is higher than for the MPC-based planner BoundMPC.
BoundMPC is also safe but has considerably longer trajectory times $T_{\mathrm{traj}}$ since it only plans locally.
Behavior cloning (BC) has fast planning time $T_{\mathrm{plan}}$ since it is only executed once every time step. However, this model is unable to solve the task successfully most of the time.
The baseline flow-matching method (FM) is fast in comparison to SafeFlowMPC in terms of $T_{\mathrm{traj}}$, but has much a lower success rate $r_{\mathrm{success}}$ due to collisions.
The average planning time $T_{\mathrm{plan}}$ is fast for BoundMPC, but the time to solve its nonlinear optimization problem varies significantly. Therefore, its planning horizon needs to be chosen such that the average time is low to account for potential longer solves. SafeFlowMPC is much better at exploiting the maximum planning time of \SI{100}{\milli\second} because the iterations~\cref{eq:flow_step} are designed such that staying on the safety manifold is incentivized and a safe trajectory always exists according to~\cref{th:anytime_feasibility}.
Solving the projection problem~\cref{eq:trajectory_projection} to optimality (Ours w/ NL-Opt) leads to similar success rates as our method but incurs much higher computational cost as indicated by the average planning time $T_{\mathrm{plan}}$. The large standard deviation of $T_{\mathrm{plan}}$ is especially problematic as it frequently violates the real-time property. This is more pronounced than for BoundMPC, which also solves a nonlinear optimization problem, because the flow network presents an unknown input to the solver, which complicates the problem.
The finetuning for SafeFlowMPC, described in~\cref{ssec:training_procedure}, is very important as the model without the finetuning (Ours w/o finetuning) has a considerably lower task success $r_{\mathrm{success}}$. Note that even without the finetuning, the model remains safe, i.e., $c_{\mathrm{obs}} = 0$.
\begin{figure}
	\centering
	\addtolength\abovecaptionskip{-15pt}
	\addtolength\belowcaptionskip{-15pt}
	\def\axisdefaultwidth{\linewidth}
	\def\axisdefaultheight{0.4\linewidth}
	\begin{tikzpicture}

		\definecolor{darkgrey176}{RGB}{176,176,176}
		\definecolor{firebrick1861843}{RGB}{186,18,43}
		\definecolor{lightgrey204}{RGB}{204,204,204}
		\definecolor{sandybrown25220370}{RGB}{252,203,70}
		\definecolor{teal0101153}{RGB}{0,101,153}

		\begin{axis}[
				legend cell align={left},
				legend style={fill opacity=0.8, draw opacity=1, text opacity=1, draw=lightgrey204},
				tick align=outside,
				tick pos=left,
				x grid style={darkgrey176},
				xlabel={\(\displaystyle t\) / \si{\second}},
				xmin=0, xmax=8,
				xtick style={color=black},
				xtick={0,1,2,3,4,5,6,7,8},
				xticklabels={
						\(\displaystyle {0}\),
						\(\displaystyle {1}\),
						\(\displaystyle {2}\),
						\(\displaystyle {3}\),
						\(\displaystyle {4}\),
						\(\displaystyle {5}\),
						\(\displaystyle {6}\),
						\(\displaystyle {7}\),
						\(\displaystyle {8}\)
					},
				y grid style={darkgrey176},
				ylabel={\(\displaystyle \max{\lvert \dot{\mathbf{q}} / \overline{\dot{\mathbf{q}}}\rvert}\)},
				ymin=0, ymax=1,
				ytick style={color=black},
				ytick={0,0.2,0.4,0.6,0.8,1},
				yticklabels={
						\(\displaystyle {0.0}\),
						\(\displaystyle {0.2}\),
						\(\displaystyle {0.4}\),
						\(\displaystyle {0.6}\),
						\(\displaystyle {0.8}\),
						\(\displaystyle {1.0}\)
					}
			]
			\addplot [semithick, firebrick1861843]
			table {%
					0 1.59960176077051e-16
					0.1 0.163575596286056
					0.2 0.287535830717192
					0.3 0.371880703293405
					0.4 0.416738965090424
					0.5 0.448923073091954
					0.6 0.488661435460804
					0.7 0.53595405219697
					0.8 0.590766887890672
					0.9 0.651148698785838
					1 0.715967436273408
					1.1 0.78522310035338
					1.2 0.858257866424956
					1.3 0.917062823488962
					1.4 0.953526649144367
					1.5 0.96764934339117
					1.6 0.959661511353804
					1.7 0.933302661254402
					1.8 0.889883002239331
					1.9 0.829402534308592
					2 0.754867247102971
					2.1 0.698564593805128
					2.2 0.693568092092444
					2.3 0.729222900160487
					2.4 0.728064683822542
					2.5 0.692645303694722
					2.6 0.62349424340295
					2.7 0.520611502947227
					2.8 0.391465017828783
					2.9 0.277915431952849
					3 0.187968628762231
					3.1 0.134072617645871
					3.2 0.10121211916776
					3.3 0.102082067231503
					3.4 0.100509535299573
					3.5 0.0964945233719736
					3.6 0.101358659943547
					3.7 0.102308648440689
					3.8 0.0961213975427381
					3.9 0.0827969072496935
					4 0.0623351775615556
					4.1 0.0347362084783244
					4.2 5.98879632091247e-16
					4.3 0
					4.4 0
					4.5 0
					4.6 0
					4.7 0
				};
			\addlegendentry{VPSTO}
			\addplot [semithick, teal0101153]
			table {%
					0 0
					0.1 0.0184306058120291
					0.2 0.117721200741964
					0.3 0.261552196780623
					0.4 0.397618403526713
					0.5 0.506486577829738
					0.6 0.590764073208293
					0.7 0.65825608405293
					0.8 0.71394611531084
					0.9 0.759797876878836
					1 0.792970741234942
					1.1 0.80996517178587
					1.2 0.810595662665314
					1.3 0.804303862320303
					1.4 0.891712554678781
					1.5 0.955163335965501
					1.6 0.986871905203069
					1.7 0.994560552181152
					1.8 0.994409919690947
					1.9 0.994230807424378
					2 0.991697602405147
					2.1 0.980618521771079
					2.2 0.949896243061797
					2.3 0.884601421436434
					2.4 0.786229258963299
					2.5 0.666032153711942
					2.6 0.536980114803228
					2.7 0.41044244418602
					2.8 0.294928002570584
					2.9 0.209225381873821
					3 0.189148075489337
					3.1 0.16489704022925
					3.2 0.140127298164869
					3.3 0.117707934225807
					3.4 0.0991014060318503
					3.5 0.0843183282939812
					3.6 0.0723970500153582
					3.7 0.0621714858853006
					3.8 0.0520009047524102
					3.9 0.0406899776496183
					4 0.0287140174303624
					4.1 0.0171919542517975
					4.2 0.0095742657552814
					4.3 0.00727007277224308
					4.4 0.00737814812182608
					4.5 0.00929031285273472
					4.6 0.0103916684712403
					4.7 0.0100069297136437
					4.8 0.00866115498370861
					4.9 0.00715532687120661
					5 0.00643019808381354
					5.1 0.0057018985523528
					5.2 0.0064345750609916
					5.3 0.00719059621879637
					5.4 0.00754861975055437
					5.5 0.00760009340997059
					5.6 0.00741121789007148
					5.7 0.00702826582771128
					5.8 0.00658921419313162
					5.9 0.00616283377779882
					6 0.00580125675782014
					6.1 0.0055146113027686
					6.2 0.00528639665929349
					6.3 0.00513117737058962
					6.4 0.00506946352720995
					6.5 0.00508093230036502
					6.6 0.00510652797239045
					6.7 0.00514486997797434
					6.8 0.00520307052545182
					6.9 0.00526730378671983
					7 0.00531788035645835
					7.1 0.00533850827472467
					7.2 0.00533385310431345
					7.3 0.00532171839991218
					7.4 0.00531520307945623
					7.5 0.00531559714461196
					7.6 0.00526352963918372
					7.7 0.00519053900363628
					7.8 0.00513547621375277
				};
			\addlegendentry{SafeFlowMPC}
			\addplot [semithick, sandybrown25220370]
			table {%
					0 0
					0.1 0.0189170699172659
					0.2 0.112830761935179
					0.3 0.216218908826957
					0.4 0.286179616594341
					0.5 0.394533269205733
					0.6 0.4843959586786
					0.7 0.548808599054389
					0.8 0.585331468708621
					0.9 0.595594006298782
					1 0.584692539107179
					1.1 0.556061403316312
					1.2 0.514267070415718
					1.3 0.464287399612016
					1.4 0.493491273577633
					1.5 0.536526415598671
					1.6 0.570763788393849
					1.7 0.59591294443325
					1.8 0.610876908632745
					1.9 0.614510546844379
					2 0.638635050980325
					2.1 0.670322439836716
					2.2 0.667005890318555
					2.3 0.632287705933785
					2.4 0.579577160345959
					2.5 0.518654879674867
					2.6 0.453015772011219
					2.7 0.385100406508632
					2.8 0.318367027353017
					2.9 0.259864086323607
					3 0.238821922421826
					3.1 0.226638558048889
					3.2 0.238676052260487
					3.3 0.247443304755871
					3.4 0.258081981856277
					3.5 0.299546593363477
					3.6 0.395690693536001
					3.7 0.497865090976086
					3.8 0.595314676755731
					3.9 0.67871143012057
					4 0.741258174054494
					4.1 0.779117259054743
					4.2 0.791098527788559
					4.3 0.778205574407111
					4.4 0.743069670923736
					4.5 0.68956529243993
					4.6 0.622267016810133
					4.7 0.545930245861808
					4.8 0.465099413159568
					4.9 0.383815826433503
					5 0.305480498527305
					5.1 0.232790567609864
					5.2 0.167713389447921
					5.3 0.111493027562977
					5.4 0.064692639860822
					5.5 0.0523314295954167
					5.6 0.051935074399732
					5.7 0.0512776455182471
					5.8 0.0504572686283
					5.9 0.0495269320652794
					6 0.0484947843723173
					6.1 0.0473341054214997
					6.2 0.0460039403070581
					6.3 0.044476469286563
					6.4 0.0427258196174686
					6.5 0.0407422669894706
					6.6 0.0385389610404809
					6.7 0.0361503153171474
					6.8 0.0336272797661989
					6.9 0.0310306627512098
					7 0.028423787932638
					7.1 0.0258663219606048
					7.2 0.0234094458458194
					7.3 0.021093208615977
					7.4 0.0189454985942197
					7.5 0.0169827726156464
					7.6 0.0152113189188659
					7.7 0.0136294638610811
					7.8 0.0122298941326102
					7.9 0.0110015985249087
					8 0.00993144785169445
					8.1 0.00900537032872701
					8.2 0.00820911502063671
					8.3 0.00752878441543252
					8.4 0.0069509720995077
					8.5 0.00646283983394427
					8.6 0.00605225672735002
					8.7 0.00570788492602645
					8.8 0.0054192236583062
					8.9 0.00517673625487433
					9 0.00497183180613615
					9.1 0.00479706166148649
					9.2 0.00464596938922985
					9.3 0.00451308013290386
					9.4 0.00439394685608783
					9.5 0.00428494895515479
					9.6 0.00418324885025901
					9.7 0.0040866904517219
					9.8 0.00399364510751681
					9.9 0.00390298706588652
					10 0.00381396846275148
					10.1 0.00372607439103981
					10.2 0.00363902495371585
				};
			\addlegendentry{BoundMPC}
		\end{axis}

	\end{tikzpicture}
	\caption{Experiment 1: Maximum values of the normalized joint velocity for the example trajectories in~\cref{fig:scen_global_planner}.}
	\label{fig:exp1_joint_vel}
	\vspace{-15pt}
\end{figure}
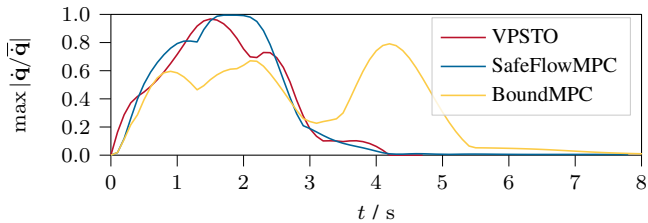
Furthermore, the maximum of the normalized joint velocities in~\cref{fig:exp1_joint_vel} shows that SafeFlowMPC is able to exploit the full range of the joint velocities without violating the limits. This leads to a shorter trajectory compared to BoundMPC.
\begin{figure}
	\centering
	\addtolength\abovecaptionskip{-15pt}
	\def\axisdefaultwidth{\linewidth}
	\def\axisdefaultheight{0.4\linewidth}
	\begin{tikzpicture}

		\definecolor{darkgrey176}{RGB}{176,176,176}
		\definecolor{firebrick1861843}{RGB}{186,18,43}
		\definecolor{lightgrey204}{RGB}{204,204,204}
		\definecolor{sandybrown25220370}{RGB}{252,203,70}
		\definecolor{teal0101153}{RGB}{0,101,153}

		\begin{axis}[
				legend cell align={left},
				legend style={fill opacity=0.8, draw opacity=1, text opacity=1, draw=lightgrey204},
				tick align=outside,
				tick pos=left,
				x grid style={darkgrey176},
				xlabel={\(\displaystyle t\) / \si{\second}},
				xmin=0, xmax=7.7,
				xtick style={color=black},
				xtick={0,1,2,3,4,5,6,7,8},
				xticklabels={
						\(\displaystyle {0}\),
						\(\displaystyle {1}\),
						\(\displaystyle {2}\),
						\(\displaystyle {3}\),
						\(\displaystyle {4}\),
						\(\displaystyle {5}\),
						\(\displaystyle {6}\),
						\(\displaystyle {7}\),
						\(\displaystyle {8}\)
					},
				y grid style={darkgrey176},
				ylabel={Norm of terminal states},
				ymin=-0.0148186870947038, ymax=0.312674287021838,
				ytick style={color=black},
				ytick={-0.05,0,0.05,0.1,0.15,0.2,0.25,0.3,0.35},
				yticklabels={
						\(\displaystyle {\ensuremath{-}0.05}\),
						\(\displaystyle {0.00}\),
						\(\displaystyle {0.05}\),
						\(\displaystyle {0.10}\),
						\(\displaystyle {0.15}\),
						\(\displaystyle {0.20}\),
						\(\displaystyle {0.25}\),
						\(\displaystyle {0.30}\),
						\(\displaystyle {0.35}\)
					}
			]
			\addplot [semithick, firebrick1861843]
			table {%
					0 0.161268893896281
					0.1 0.15219352149783
					0.2 0.1502646081419
					0.3 0.153917813600201
					0.4 0.161458524088986
					0.5 0.175401458917598
					0.6 0.188121794263868
					0.7 0.196857435734934
					0.8 0.194779739833628
					0.9 0.198634693210228
					1 0.201633466617285
					1.1 0.297788242743814
					1.2 0.240183365611914
					1.3 0.180629913890048
					1.4 0.105737418560784
					1.5 0.0565221038469561
					1.6 0.0281324408127514
					1.7 0.0227566000560884
					1.8 0.0237636752503771
					1.9 0.0232500233549874
					2 0.0247175235516338
					2.1 0.0250757652363189
					2.2 0.0225997837181112
					2.3 0.0178383228444813
					2.4 0.0155411112039225
					2.5 0.0151148062932153
					2.6 0.01495069431563
					2.7 0.0144255487931616
					2.8 0.0127706536922388
					2.9 0.0112742139993536
					3 0.0105681944804914
					3.1 0.0107098092067237
					3.2 0.0108322800654462
					3.3 0.0101986478960379
					3.4 0.00866215310052327
					3.5 0.00697631646149814
					3.6 0.00656062539100674
					3.7 0.00745111966994885
					3.8 0.00848525625272594
					3.9 0.0090050683485861
					4 0.00915480203274148
					4.1 0.00931598309204533
					4.2 0.00913445410180616
					4.3 0.0089866053143976
					4.4 0.00874160878053256
					4.5 0.00857760253864531
					4.6 0.00834088889807995
					4.7 0.00815861564178112
					4.8 0.0078732912278285
					4.9 0.0079010035246116
					5 0.00782469733299145
					5.1 0.00784039830375427
					5.2 0.00774203094613731
					5.3 0.00792361077434738
					5.4 0.00796890932543626
					5.5 0.00814511851156812
					5.6 0.00820761436993212
					5.7 0.00813958499478818
					5.8 0.00808250929923531
					5.9 0.0081293008068002
					6 0.00804611823923148
					6.1 0.00819354121022326
					6.2 0.00812598921517652
					6.3 0.00797017319372911
					6.4 0.00798833181676813
					6.5 0.00791608808787933
					6.6 0.00791957069947483
					6.7 0.00800546375382453
					6.8 0.00791080218986105
					6.9 0.00799446647020529
					7 0.00794972290606129
					7.1 0.00784036161875959
					7.2 0.00789861628581216
					7.3 0.00793541783397115
					7.4 0.00792942825085929
					7.5 0.00789566743193199
					7.6 0.00789975970853637
					7.7 0.00787136502530157
				};
			\addlegendentry{\(\displaystyle \lVert \dot{\mathbf{q}}_{T} \rVert_2\) / \si{\degree\per\second}}
			\addplot [semithick, teal0101153]
			table {%
					0 0.046590377093494
					0.1 0.042956342330676
					0.2 0.0415834382148833
					0.3 0.0421804316911703
					0.4 0.0441563409590078
					0.5 0.048627046799293
					0.6 0.0526748145671598
					0.7 0.0561724955274299
					0.8 0.0550880042102347
					0.9 0.0548107393859186
					1 0.0564005300610931
					1.1 0.0890421885658178
					1.2 0.0747098982043929
					1.3 0.0566642876078771
					1.4 0.0357073047496852
					1.5 0.0225083945357945
					1.6 0.013016715630521
					1.7 0.00760884083717528
					1.8 0.00599325588103632
					1.9 0.0050419291277444
					2 0.00499865863015934
					2.1 0.00504375671034333
					2.2 0.0047015925251516
					2.3 0.00405348299426699
					2.4 0.00403410821711801
					2.5 0.00425044266787661
					2.6 0.00431528395241389
					2.7 0.00414595496837507
					2.8 0.00365425992646386
					2.9 0.00316079627449927
					3 0.00281755941727658
					3.1 0.00268770935032175
					3.2 0.00260296309997812
					3.3 0.00239235304493522
					3.4 0.00201308504779259
					3.5 0.00162176729167963
					3.6 0.00156292951280454
					3.7 0.00186360297893772
					3.8 0.00216245900413911
					3.9 0.00230628648701908
					4 0.00235284421751925
					4.1 0.00237196246802432
					4.2 0.00231811953136825
					4.3 0.00225843553325729
					4.4 0.00217948421748467
					4.5 0.00212180751942064
					4.6 0.00205547351816115
					4.7 0.00200152931389785
					4.8 0.00193783982575481
					4.9 0.0019326232033241
					5 0.00192441234733112
					5.1 0.00193545733916697
					5.2 0.00191845047702597
					5.3 0.00196468482436795
					5.4 0.00197761850780289
					5.5 0.00201861906216504
					5.6 0.00203389280754928
					5.7 0.00202601915290103
					5.8 0.00201670908571038
					5.9 0.00202548745531718
					6 0.00200412389105146
					6.1 0.00203233303663027
					6.2 0.00201842266921201
					6.3 0.00198508052683883
					6.4 0.00198710088721219
					6.5 0.00197437888149765
					6.6 0.00197255383289035
					6.7 0.00198944779358043
					6.8 0.00197017304121165
					6.9 0.00198700504971151
					7 0.00197514116195914
					7.1 0.00195800946144506
					7.2 0.00196616947462781
					7.3 0.00197340126982148
					7.4 0.00197236072213331
					7.5 0.00196595052612592
					7.6 0.00197082954042898
					7.7 0.00196300949238003
				};
			\addlegendentry{\(\displaystyle \lVert \ddot{\mathbf{q}}_{T} \rVert_2\) / \si{\degree\per\second\squared}}
			\addplot [semithick, sandybrown25220370]
			table {%
					0 0.00206130250033589
					0.1 0.00189455447808024
					0.2 0.00182905830225893
					0.3 0.00185283652563661
					0.4 0.00193916739856299
					0.5 0.00213957638141135
					0.6 0.00232091128843915
					0.7 0.00248187952507476
					0.8 0.00243276725973491
					0.9 0.0024121660165799
					1 0.00249064443577979
					1.1 0.00395772377292505
					1.2 0.00333797479414365
					1.3 0.00253665043112048
					1.4 0.00161277210520188
					1.5 0.00103631830014641
					1.6 0.00061391487592283
					1.7 0.000361326443601376
					1.8 0.00027495131274602
					1.9 0.000222459851633926
					2 0.000212361848793649
					2.1 0.000211643210407065
					2.2 0.000198032386237429
					2.3 0.000173940792290672
					2.4 0.000176891635200715
					2.5 0.000188184537689583
					2.6 0.000191564388291151
					2.7 0.000183901467876672
					2.8 0.000162064312468814
					2.9 0.000139856947953618
					3 0.000123828675844091
					3.1 0.000116962310184501
					3.2 0.000112376540203956
					3.3 0.000102800996093023
					3.4 8.63547328479898e-05
					3.5 6.96026880464389e-05
					3.6 6.73571833208558e-05
					3.7 8.08757432799853e-05
					3.8 9.40648965732046e-05
					3.9 0.000100376429271204
					4 0.000102439244950905
					4.1 0.000103123880714262
					4.2 0.000100728621681298
					4.3 9.79927880237708e-05
					4.4 9.44588575416276e-05
					4.5 9.18526989597799e-05
					4.6 8.89359030501189e-05
					4.7 8.65500865473362e-05
					4.8 8.38414757650476e-05
					4.9 8.35461139048901e-05
					5 8.32609459103451e-05
					5.1 8.37849397582239e-05
					5.2 8.3100446875921e-05
					5.3 8.51062049867391e-05
					5.4 8.56784868587347e-05
					5.5 8.74343422082886e-05
					5.6 8.80912904316164e-05
					5.7 8.78042152135924e-05
					5.8 8.7431433713993e-05
					5.9 8.77925709692208e-05
					6 8.68651790333326e-05
					6.1 8.80330801851832e-05
					6.2 8.74473697865322e-05
					6.3 8.60383890178355e-05
					6.4 8.61103600895052e-05
					6.5 8.55934056593324e-05
					6.6 8.54973056728749e-05
					6.7 8.62002634397096e-05
					6.8 8.53932679071026e-05
					6.9 8.6096191912087e-05
					7 8.55792134660161e-05
					7.1 8.49002908405727e-05
					7.2 8.52148474165366e-05
					7.3 8.55148252003865e-05
					7.4 8.54726443368724e-05
					7.5 8.52090112829156e-05
					7.6 8.54419590004554e-05
					7.7 8.50994049412676e-05
				};
			\addlegendentry{\(\displaystyle \lVert \dddot{\mathbf{q}}_{T} \rVert_2\) / \si{\degree\per\second\cubed}}
		\end{axis}

	\end{tikzpicture}
	\caption{Experiment 1: Norm of joint velocity $\dot{\vec{q}}_{T}$, acceleration $\ddot{\vec{q}}_{T}$, and jerk $\dddot{\vec{q}}_{T}$ at the end of each planning horizon using SafeFlowMPC for the example trajectory in~\cref{fig:scen_global_planner}.}
	\label{fig:exp1_joint_term}
\end{figure}
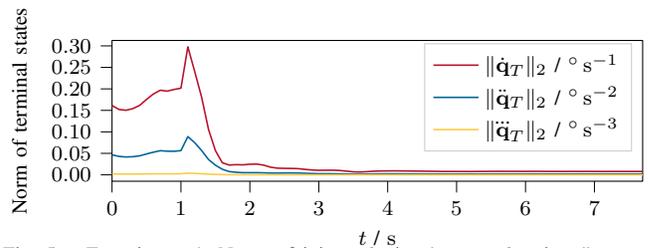
Additionally, the joint states at the end of each planning horizon, depicted in~\cref{fig:exp1_joint_term}, are close to zero such that the horizon always ends in the safe terminal set $\mathcal{S}_{T}$ according to~\cref{eq:robot_g}. The states are not exactly zero as this is difficult to enforce but sufficiently low for the low-level joint controller to stabilize around the final point as demonstrated in the supplementary video.

\begin{table}
	\caption{Experiment 1: Results of different planners in terms of average trajectory duration $T_{\mathrm{traj}}$, average planning time $T_{\mathrm{plan}}$, obstacle collisions $c_{\mathrm{obs}}>0$, and success rate $r_{\mathrm{success}}$}\label{tab:exp1_comp}
	\centering
	\begin{tabular}[c]{p{1.5cm}cccc}
		Method              & $T_{\mathrm{traj}}$ / \si{\second} & $T_{\mathrm{plan}}$ / \si{\milli\second} & $c_{\mathrm{obs}}$ / \si{\meter} & $r_{\mathrm{success}}$ \\
		\hline
		VP-STO              & $4.46$                             & $11400$ (offline)                        & $0$                              & \SI{100}{\percent}     \\
		BoundMPC            & $6.16$                             & $35 \pm 10$                              & $0$                              & \SI{76}{\percent}      \\
		BC                  & $7.6$                              & $1 \pm 1$                                & $0.08$                           & \SI{2}{\percent}       \\
		FM                  & $5.77$                             & $33 \pm 2$                               & $0.08$                           & \SI{22}{\percent}      \\
		Ours w/ NL-Opt      & $5.69$                             & $93 \pm 178$                             & $0$                              & $\SI{87}{\percent}$    \\
		Ours w/o finetuning & $6.40$                             & $64 \pm 5$                               & $0$                              & \SI{58}{\percent}      \\
		SafeFlowMPC (Ours)  & $5.12$                             & $62 \pm 5$                               & $0$                              & \SI{86}{\percent}      \\
		\hline
	\end{tabular}
\end{table}

\subsection{Experiment 2: Online replanning for object grasps}
\label{sec:global_local_replanning}

The reactivity of SafeFlowMPC is compared in the environment shown in ~\cref{fig:scen_global_planner} by changing the pose of the object during the motion three times randomly. The results for the real-time capable methods are reported in~\cref{tab:exp1_comp_replan}. A motion is counted as successful if it reaches the last replanned object pose and has no collisions. Generally, the drop in success rate for all methods indicates that this task is more challenging than Experiment 1 in~\cref{sec:global_local_planning}. SafeFlowMPC is the most successful method and is able to replan motions in real time without any collisions in \SI{82}{\percent} of the trails.

Thus, the SafeFlowMPC formulation enables fast and safe trajectory planning with similar quality as global trajectory planners. At the same time, it allows for more reactivity due to the online replanning.

\begin{table}
	\caption{Experiment 2: Results of different planners in terms of average planning time $T_{\mathrm{plan}}$, obstacle collisions $c_{\mathrm{obs}}>0$, and success rate $r_{\mathrm{success}}$}\label{tab:exp1_comp_replan}
	\centering
	\begin{tabular}[c]{lccc}
		Method              & $T_{\mathrm{plan}}$ / \si{\milli\second} & $c_{\mathrm{obs}}$ / \si{\meter} & $r_{\mathrm{success}}$ \\
		\hline
		BoundMPC            & $36 \pm 11$                              & $0$                              & \SI{74}{\percent}      \\
		BC                  & $1 \pm 1$                                & $0.08$                           & \SI{0}{\percent}       \\
		FM                  & $33 \pm 2$                               & $0.08$                           & \SI{15}{\percent}      \\
		Ours w/o finetuning & $62 \pm 5$                               & $0$                              & \SI{52}{\percent}      \\
		SafeFlowMPC (Ours)  & $63 \pm 5$                               & $0$                              & \SI{82}{\percent}      \\
		\hline
	\end{tabular}
\end{table}

\subsection{Experiment 3: Dynamic human-robot object handover}
\label{sec:handover}

\begin{figure}
	\centering
	\addtolength\abovecaptionskip{-15pt}
	\def\axisdefaultwidth{\linewidth}
	\def\axisdefaultheight{0.7\linewidth}
	\begin{tikzpicture}
		\definecolor{darkgray176}{RGB}{176,176,176}
		\definecolor{acinblue}{RGB}{0,102,153}
		\definecolor{acinyellow}{RGB}{252, 204, 71}
		\definecolor{acingreen}{RGB}{0,190,65}
		\definecolor{acinred}{RGB}{186,18,43}
		\definecolor{lightgray204}{RGB}{204,204,204}

		\begin{axis}[
				view={115}{20},
				xlabel={$x$ / \si{\meter}},
				ylabel={$y$ / \si{\meter}},
				zlabel={$z$ / \si{\meter}},
				zmin=0,
				zmax=1.5,
				xmin=-1,
				xmax=2.5,
				axis equal,
				tick align=outside,
				tick pos=left,
				x grid style={darkgray176},
				y grid style={darkgray176},
				xtick style={color=black},
				ytick style={color=black},
				legend cell align={left},
				legend style={
						fill opacity=0.8,
						draw opacity=1,
						text opacity=1,
						draw=lightgray204,
					},
				legend pos=north east,
				legend entries ={SafeFlowMPC, Human Hand}
			]

			\node[inner sep=0pt, opacity=0.5] (qf) at (axis cs:0.05, 0.03, 0.79)
			{\includegraphics[width=.146\textwidth]{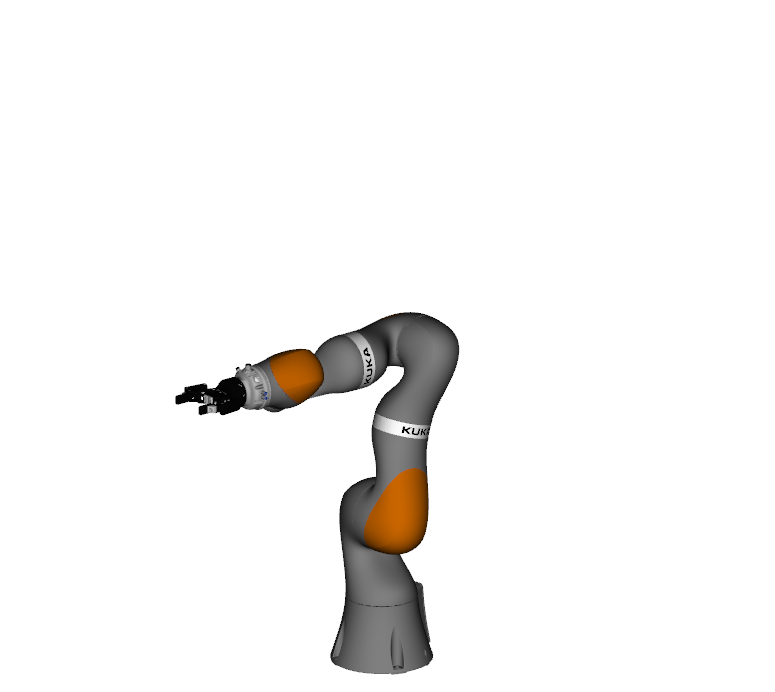}};
			\node[inner sep=0pt, opacity=0.5] (qf) at (axis cs:0.05, 0.03, 0.79)
			{\includegraphics[width=.146\textwidth]{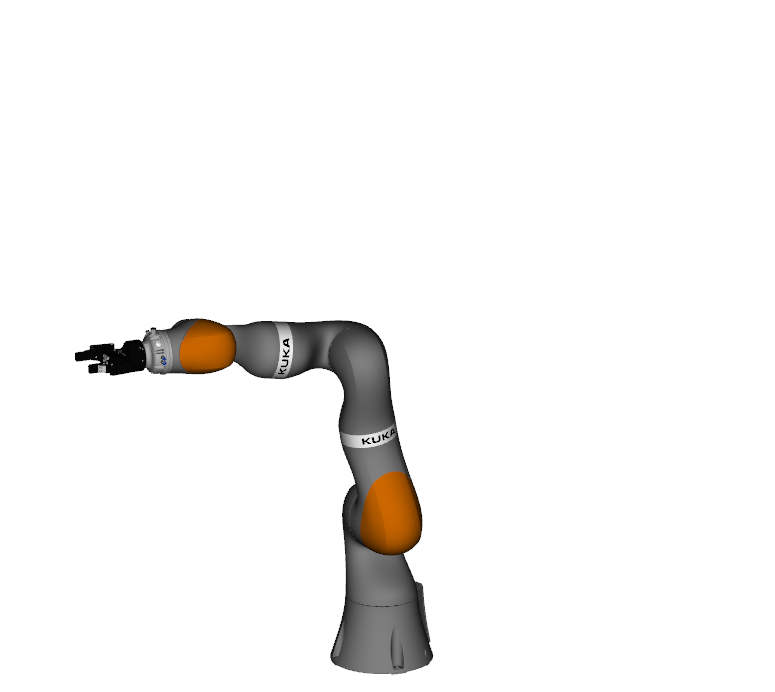}};
			\node[inner sep=0pt, opacity=0.5] (qf) at (axis cs:0.05, 0.03, 0.79)
			{\includegraphics[width=.146\textwidth]{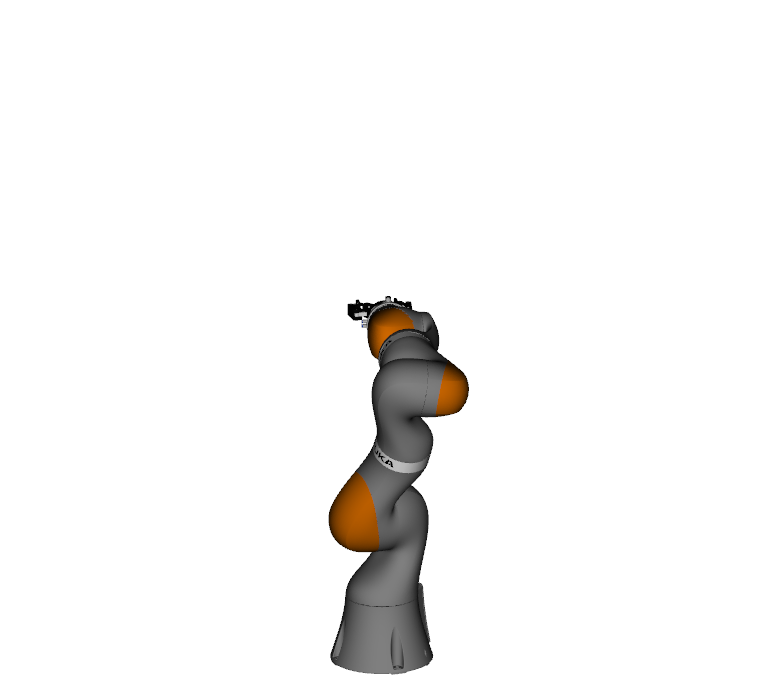}};

			\pgfplotstableread{data/p_SafeFlowMPC_ho.txt}\psfmpc;
			\pgfplotstableread{data/r_SafeFlowMPC_ho.txt}\rsfmpc;
			\pgfplotstableread{data/p_hand.txt}\phand;
			\addplot3 [acinred, thick]
			table [
					x expr=\thisrowno{0},
					y expr=\thisrowno{1},
					z expr=\thisrowno{2}
				] {\psfmpc};
			\addplot3 [acinblue, thick]
			table [
					x expr=\thisrowno{0},
					y expr=\thisrowno{1},
					z expr=\thisrowno{2}
				] {\phand};
			%

			\addplot3 [patch,patch type=triangle, patch refines=0, shader=flat, opacity=0.15, color=acinred] coordinates
				{
					(0.9686957182973392,-0.7330219075336785,0.0) (0.9686957182973394,-1.2330219075336786,2.0) (0.9686957182973392,-0.7330219075336785,2.0)
					(0.9686957182973392,-0.7330219075336785,0.0) (0.9686957182973394,-1.2330219075336786,2.0) (0.9686957182973394,-1.2330219075336786,0.0)
					(0.9686957182973392,-0.7330219075336785,0.0) (0.4686957182973392,-0.7330219075336785,-0.0) (0.4686957182973392,-1.2330219075336784,-0.0)
					(0.9686957182973392,-0.7330219075336785,0.0) (0.9686957182973394,-1.2330219075336786,0.0) (0.4686957182973392,-1.2330219075336784,-0.0)
					(0.4686957182973392,-0.7330219075336785,2.0) (0.9686957182973392,-0.7330219075336785,0.0) (0.4686957182973392,-0.7330219075336785,-0.0)
					(0.4686957182973392,-0.7330219075336785,2.0) (0.9686957182973392,-0.7330219075336785,0.0) (0.9686957182973392,-0.7330219075336785,2.0)
					(0.4686957182973392,-1.2330219075336784,2.0) (0.9686957182973394,-1.2330219075336786,0.0) (0.4686957182973392,-1.2330219075336784,-0.0)
					(0.4686957182973392,-1.2330219075336784,2.0) (0.9686957182973394,-1.2330219075336786,2.0) (0.9686957182973394,-1.2330219075336786,0.0)
					(0.4686957182973392,-1.2330219075336784,2.0) (0.4686957182973392,-0.7330219075336785,-0.0) (0.4686957182973392,-1.2330219075336784,-0.0)
					(0.4686957182973392,-1.2330219075336784,2.0) (0.4686957182973392,-0.7330219075336785,2.0) (0.4686957182973392,-0.7330219075336785,-0.0)
					(0.4686957182973392,-1.2330219075336784,2.0) (0.9686957182973394,-1.2330219075336786,2.0) (0.9686957182973392,-0.7330219075336785,2.0)
					(0.4686957182973392,-1.2330219075336784,2.0) (0.4686957182973392,-0.7330219075336785,2.0) (0.9686957182973392,-0.7330219075336785,2.0)
				};
			\addplot3 [patch,patch type=triangle, patch refines=0, shader=flat, opacity=0.15, color=acingreen] coordinates
				{
					(0.5586957182973392,-0.44302190753367837,0.7566626719988508) (0.47869571829733926,-0.5230219075336784,0.7566626719988508) (0.5586957182973393,-0.5230219075336784,0.7566626719988508)
					(0.5586957182973392,-0.44302190753367837,0.7566626719988508) (0.47869571829733926,-0.5230219075336784,0.7566626719988508) (0.47869571829733926,-0.44302190753367837,0.7566626719988508)
					(0.47869571829733926,-0.44302190753367837,1.006662671998851) (0.47869571829733926,-0.5230219075336784,0.7566626719988508) (0.47869571829733926,-0.44302190753367837,0.7566626719988508)
					(0.47869571829733926,-0.44302190753367837,1.006662671998851) (0.47869571829733926,-0.5230219075336784,1.006662671998851) (0.47869571829733926,-0.5230219075336784,0.7566626719988508)
					(0.47869571829733926,-0.44302190753367837,1.006662671998851) (0.5586957182973392,-0.44302190753367837,0.7566626719988508) (0.47869571829733926,-0.44302190753367837,0.7566626719988508)
					(0.47869571829733926,-0.44302190753367837,1.006662671998851) (0.5586957182973392,-0.44302190753367837,0.7566626719988508) (0.5586957182973392,-0.44302190753367837,1.006662671998851)
					(0.5586957182973393,-0.5230219075336784,1.006662671998851) (0.47869571829733926,-0.5230219075336784,0.7566626719988508) (0.5586957182973393,-0.5230219075336784,0.7566626719988508)
					(0.5586957182973393,-0.5230219075336784,1.006662671998851) (0.47869571829733926,-0.5230219075336784,1.006662671998851) (0.47869571829733926,-0.5230219075336784,0.7566626719988508)
					(0.5586957182973393,-0.5230219075336784,1.006662671998851) (0.5586957182973392,-0.44302190753367837,0.7566626719988508) (0.5586957182973393,-0.5230219075336784,0.7566626719988508)
					(0.5586957182973393,-0.5230219075336784,1.006662671998851) (0.5586957182973392,-0.44302190753367837,0.7566626719988508) (0.5586957182973392,-0.44302190753367837,1.006662671998851)
					(0.5586957182973393,-0.5230219075336784,1.006662671998851) (0.47869571829733926,-0.44302190753367837,1.006662671998851) (0.47869571829733926,-0.5230219075336784,1.006662671998851)
					(0.5586957182973393,-0.5230219075336784,1.006662671998851) (0.47869571829733926,-0.44302190753367837,1.006662671998851) (0.5586957182973392,-0.44302190753367837,1.006662671998851)
				};

			\addplot3[
				only marks,
				mark=*,
				mark size=1pt,
			] coordinates {(0, 0, 0)};
			\addplot3[
				only marks,
				mark=*,
				mark size=1pt,
			] coordinates{(2.32691068649292, -0.6337637901306152, 0.41110029220581057)};
			\draw (axis cs:2.32691068649292, -0.6337637901306152, 0.31110029220581057) node[
				scale=1.0,
				anchor=base west,
				text=black,
				rotate=0.0
			]{\bfseries $\vec{p}_\mathrm{h, 0}$};

			\addplot3[
				only marks,
				mark=*,
				mark size=1pt,
			] coordinates{(0.5186957182973392, -0.48302190753367835, 0.961662671998850)};
			\draw (axis cs:0.5186957182973392, -1.12302190753367835, 0.9616626719988509) node[
				scale=1.0,
				anchor=base west,
				text=black,
				rotate=0.0
			]{\bfseries $\vec{p}_\mathrm{h, ho}$};

			\addplot3[
				only marks,
				mark=*,
				mark size=1pt,
			] coordinates{(-0.9067300803868508, -0.5777403559832155, 0.8118041792813402)};
			\draw (axis cs:-0.9067300803868508, -0.5777403559832155, 0.8118041792813402) node[
				scale=1.0,
				anchor=base west,
				text=black,
				rotate=0.0
			]{\bfseries $\vec{p}_\mathrm{h, f}$};

			\addplot3[
				only marks,
				mark=*,
				mark size=1pt,
			] coordinates{(0.49353350694834774, -0.2566223602427127, 0.7606581156393782)};
			\draw (axis cs:0.49353350694834774, -0.2566223602427127, 0.6206581156393782) node[
				scale=1.0,
				anchor=base west,
				text=black,
				rotate=0.0
			]{\bfseries $\vec{p}_\mathrm{0}$};

			\addplot3[
				only marks,
				mark=*,
				mark size=1pt,
			] coordinates{(0.5761461703654832, -0.4627136540583015, 0.8297219928918769)};
			\draw (axis cs:0.4890795894423669, -0.5081318827242907, 0.8437911848078921) node[
				scale=1.0,
				anchor=base west,
				text=black,
				rotate=0.0
			]{\bfseries $\vec{p}_\mathrm{ho}$};

			\addplot3[
				only marks,
				mark=*,
				mark size=1pt,
			] coordinates{(-0.4464148000467676, -0.2192111234588623, 0.7037999563464113)};
			\draw (axis cs:-0.44650080267280783, -0.3075765280398828, 0.7231554130968114) node[
				scale=1.0,
				anchor=base west,
				text=black,
				rotate=0.0
			]{\bfseries $\vec{p}_\mathrm{f}$};

		\end{axis}
	\end{tikzpicture}
	\caption{Experiment 3: Example handover trajectory from the handover dataset. The robot is visualized for the start, handover, and end configuration of the SafeFlowMPC trajectory. The human is represented by the red box at the handover location grabbing the green object from the robot\textquotesingle s end-effector.}
	\label{fig:scen_handover}
\end{figure}

While the objective for the point-to-point motions in Sections~\ref{sec:global_local_planning} and~\ref{sec:global_local_replanning} is easily encoded as a numerical function, i.e., minimizing the distance to the goal pose, this is not the case for many other objectives. Especially in human-robot interactions, it is difficult to find such functions. SafeFlowMPC allows using models learned from demonstrated trajectories instead of optimizing a scenario-specific reward function. We demonstrate this with a dynamic human-robot object handover, where a robot hands an object to a human as the human passes the robot. The human-human handover dataset from~\cite{kimLearningbasedDynamicRobottoHuman2025} is used for this task. The human hand trajectories of the giver are converted to the joint space of the robot using a trajectory optimization problem. The dataset contains 900 trajectories for training and 100 for testing of which 67 where successfully translated to the joint space. An example trajectory is depicted in the environment in~\cref{fig:scen_handover}. In this example, the human hand moves from the initial position $\vec{p}_{\mathrm{h, 0}}$ to the handover location $\vec{p}_{\mathrm{h, ho}}$ to grab the object and then continues to the final position $\vec{p}_{\mathrm{h, f}}$. The corresponding robot end-effector poses are $\vec{p}_{\mathrm{0}}$, $\vec{p}_{\mathrm{ho}}$, and $\vec{p}_{\mathrm{f}}$. At the handover location, the human hand is above the robot\textquotesingle s end-effector, as the object extends in the $z$-direction.
Safety is introduced by modeling the human body as an obstacle. This does not include the arm of the human as it needs to grab the object and come in close proximity to the end effector. As~\cref{eq:robot_h} limits the joint jerk of the robot, the resulting trajectory is smooth, which is important for the perceived safety for the human~\cite{ortenziObjectHandoversReview2021, zacharakiSafetyBoundsHuman2020}. Furthermore, the constraint
\begin{equation}
	\label{eq:handover_terminal}
	\vec{f}_{\mathrm{fk}}(\vec{q}(t_{0} + T)) \in \mathcal{S}_{T, \mathrm{ho}}
\end{equation}
is added to~\cref{eq:robot_h} as a terminal constraint to ensure that the robot returns the end-effector to a safe set when planning fails. As the human always passes the robot on the negative $y$-direction (see~\cref{fig:scen_handover}), the safe terminal set $\mathcal{S}_{T, \mathrm{ho}}$ constraints the end-effector to limit the $y$-position to remain close to the robot such that it cannot interfere with the human.
SafeFlowMPC computes the robot joint trajectory and the binary gripper state to learn the release timing of the object.
No guiding function $J_{\mathrm{guide}}(\vec{q}(t))$ is used here.

The comparison in~\cref{tab:exp2_comp} on the test dataset shows that SafeFlowMPC computes trajectories that are closest to the demonstrations while remaining safe according to~\cref{eq:handover_terminal} and real-time capable. Note that the average distance to the demonstrations $d_{\mathrm{demo}}$ assumes no reaction of the human to the differing robot motions, which is not applicable to real-world object handovers. Therefore, several real-world handovers are shown in the supplementary video, where the human hand is tracked with an OptiTrack system\footnote{OptiTrack https://www.optitrack.com/}. They show that SafeFlowMPC succeeds to hand over an object in the real world by learning from human demonstrations.

\begin{table}
	\caption{Experiment 3: Results of different planners in terms of average planning time $T_{\mathrm{plan}}$, terminal constraint violations $c_{T}>0$, and average position distance to the demonstrated trajectories $d_{\mathrm{demo}}$}\label{tab:exp2_comp}
	\centering
	\begin{tabular}[c]{lccc}
		Method             & $T_{\mathrm{plan}}$ / \si{\milli\second} & $c_{T}$ / \si{\meter} & $d_{\mathrm{demo}}$ / \si{\meter} \\
		\hline
		BC                 & $1 \pm 1$                                & $0.0$                 & $0.28$                            \\
		FM                 & $31 \pm 2$                               & $0.05$                & $0.21$                            \\
		SafeFlowMPC (Ours) & $63 \pm 6$                               & $0.0$                 & $0.12$                            \\
		\hline
	\end{tabular}
\end{table}

\section{Limitations}
Despite its strengths, SafeFlowMPC has notable limitations. It assumes the availability of a suitable dataset and requires a finetuning step to achieve optimal . Additionally, the method is restricted to constraints expressible as differentiable functions over the planning horizon. The design of safe terminal constraints also remains non-trivial, particularly for complex kinematics.
Furthermore, ensuring safety in interactive scenarios, such as the presented dynamic object handover in~\cref{sec:handover}, is generally difficult. In this work, we simplified the safety definition by considering the collisions with the current position of the human. However, the human is moving around, which is difficult to predict, requiring a trade-off between safety and performance~\cite{ortenziObjectHandoversReview2021}. A possible approach for future work is the prediction of the trajectory distribution of other actors~\cite{hewingSimulationTrajectoryPrediction2020} in combination with safety constraints~\cite{ortenziObjectHandoversReview2021}. This is out of the scope of this work as the dataset~\cite{kimLearningbasedDynamicRobottoHuman2025} contains very noisy data of the human movements, which is unsuitable for such models.
The object handover further simplifies the interactions within human-robot handovers by only considering the human hand trajectory as input for SafeFlowMPC. Improvements in performance can be gained by considering other factors such as trust, gaze, or other actors in the scene~\cite{zacharakiSafetyBoundsHuman2020}, which requires more expressive datasets.

\section{Conclusions and Future Work}
\label{sec:conclusions}
This work introduced SafeFlowMPC, a novel method for online trajectory planning that integrates safety manifolds with conditional flow-matching. The approach optimizes receding-horizon trajectories through an iterative process: a flow matching network generates desired motions, while an online optimizer enforces safety constraints. By carefully designing terminal constraints, SafeFlowMPC guarantees safety throughout operation.
We demonstrated the effectiveness of SafeFlowMPC in three challenging scenarios—static and dynamic object grasping with obstacles, and human-robot object handover—where it outperformed the baseline methods. Successful deployment on a real-world KUKA 7-DoF manipulator further validated the method’s real-time feasibility and practical applicability.
Future research will focus on extending SafeFlowMPC with learning-based safety filters~\cite{wabersichDataDrivenSafetyFilters2023} and terminal constraints, aiming to broaden its applicability and robustness in dynamic, real-world settings.



\bibliographystyle{IEEEtran}
\bibliography{IEEEabrv,safeflowmatching}  

\end{document}

%% file: tex_packages.tex
\usepackage[colorlinks,draft=false,allcolors=blue]{hyperref}
\usepackage{moreverb,url}
\usepackage{pgfplots}
\usepackage{pgfplotstable}
\usepgfplotslibrary{groupplots}
\usepackage{amsmath}
\usepackage{amsfonts}
\usepackage{amsthm}
\usepackage{import}
\usepackage{siunitx}
\usepackage{cleveref}
\usepackage{algorithm}
\usepackage{array}
\usepackage{algpseudocode}

%% file: tex_settings.tex
\crefformat{figure}{Fig.~#2#1#3}
\Crefformat{figure}{Figure~#2#1#3}
\crefformat{equation}{(#2#1#3)}
\crefformat{section}{Section~#2#1#3}
\crefformat{table}{Table~#2#1#3}
\crefformat{algorithm}{Algorithm~#2#1#3}
\crefformat{appendix}{Appendix~#2#1#3}
\crefformat{assumption}{Assumption~#2#1#3}
\crefformat{theorem}{Theorem~#2#1#3}

\DeclareRobustCommand{\vec}[1]{
	\ifthenelse{\equal{#1}{\omega} \OR \equal{#1}{\varphi} \OR \equal{#1}{\alpha} \OR \equal{#1}{\beta} \OR \equal{#1}{\chi} \OR \equal{#1}{\delta} \OR \equal{#1}{\varepsilon} \OR \equal{#1}{\phi} \OR \equal{#1}{\epsilon} \OR \equal{#1}{\gamma} \OR \equal{#1}{\eta} \OR \equal{#1}{\iota} \OR \equal{#1}{\kappa} \OR \equal{#1}{\lambda} \OR \equal{#1}{\mu} \OR \equal{#1}{\nu} \OR \equal{#1}{\pi} \OR \equal{#1}{\theta} \OR \equal{#1}{\vartheta} \OR \equal{#1}{\rho} \OR \equal{#1}{\sigma} \OR \equal{#1}{\varsigma} \OR \equal{#1}{\tau} \OR \equal{#1}{\upsilon} \OR \equal{#1}{\xi} \OR \equal{#1}{\psi} \OR \equal{#1}{\zeta}}{
		\boldsymbol{#1}
	}{
		\mathbf{#1}
	}
}

\usetikzlibrary {3d}
\usetikzlibrary{decorations.markings}
\usepgfplotslibrary{fillbetween}
\usepgfplotslibrary{patchplots}
\usetikzlibrary {arrows.meta}

\pgfplotsset{every axis/.append style={
			font=\footnotesize,
			line join=round,
			legend style={/tikz/every even column/.append style={column sep=0.2cm}}
		}}

\DeclareRobustCommand{\qs}[1]{\textsinglequote s}

\renewcommand{\dddot}[1]{%
	{\mathop{\kern\z@#1}\limits^{\makebox[0pt][c]{\vbox to-1.4\ex@{\kern-\tw@\ex@
						\hbox{\normalfont...}\vss}}}}}
\newcommand{\dddotsuper}[1]{%
	{\mathop{\kern\z@#1}\limits^{\makebox[0pt][c]{\vbox to-1.6\ex@{\kern-\tw@\ex@
						\hbox{\scriptsize...}\vss}}}}}
\newcommand{\dddotalt}[1]{%
	{\mathop{\kern\z@#1}\limits^{\makebox[0pt][c]{\vbox to-2.2\ex@{\kern-\tw@\ex@
						\hbox{\normalfont\scaleddot\kern-0.5pt\scaleddot\kern-0.5pt\scaleddot}\vss}}}}}
\makeatother

%% file: method_overview.pdf_tex
\begingroup%
  \makeatletter%
  \providecommand\color[2][]{%
    \errmessage{(Inkscape) Color is used for the text in Inkscape, but the package 'color.sty' is not loaded}%
    \renewcommand\color[2][]{}%
  }%
  \providecommand\transparent[1]{%
    \errmessage{(Inkscape) Transparency is used (non-zero) for the text in Inkscape, but the package 'transparent.sty' is not loaded}%
    \renewcommand\transparent[1]{}%
  }%
  \providecommand\rotatebox[2]{#2}%
  \newcommand*\fsize{\dimexpr\f@size pt\relax}%
  \newcommand*\lineheight[1]{\fontsize{\fsize}{#1\fsize}\selectfont}%
  \ifx\svgwidth\undefined%
    \setlength{\unitlength}{135.13558816bp}%
    \ifx\svgscale\undefined%
      \relax%
    \else%
      \setlength{\unitlength}{\unitlength * \real{\svgscale}}%
    \fi%
  \else%
    \setlength{\unitlength}{\svgwidth}%
  \fi%
  \global\let\svgwidth\undefined%
  \global\let\svgscale\undefined%
  \makeatother%
  \begin{picture}(1,0.6799063)%
    \lineheight{1}%
    \setlength\tabcolsep{0pt}%
    \put(0,0){\includegraphics[width=\unitlength,page=1]{method_overview.pdf}}%
  \end{picture}%
\endgroup%

%% file: planning_overview_robot.pdf_tex
\begingroup%
  \makeatletter%
  \providecommand\color[2][]{%
    \errmessage{(Inkscape) Color is used for the text in Inkscape, but the package 'color.sty' is not loaded}%
    \renewcommand\color[2][]{}%
  }%
  \providecommand\transparent[1]{%
    \errmessage{(Inkscape) Transparency is used (non-zero) for the text in Inkscape, but the package 'transparent.sty' is not loaded}%
    \renewcommand\transparent[1]{}%
  }%
  \providecommand\rotatebox[2]{#2}%
  \newcommand*\fsize{\dimexpr\f@size pt\relax}%
  \newcommand*\lineheight[1]{\fontsize{\fsize}{#1\fsize}\selectfont}%
  \ifx\svgwidth\undefined%
    \setlength{\unitlength}{178.79811084bp}%
    \ifx\svgscale\undefined%
      \relax%
    \else%
      \setlength{\unitlength}{\unitlength * \real{\svgscale}}%
    \fi%
  \else%
    \setlength{\unitlength}{\svgwidth}%
  \fi%
  \global\let\svgwidth\undefined%
  \global\let\svgscale\undefined%
  \makeatother%
  \begin{picture}(1,0.857965)%
    \lineheight{1}%
    \setlength\tabcolsep{0pt}%
    \put(0,0){\includegraphics[width=\unitlength,page=1]{planning_overview_robot.pdf}}%
  \end{picture}%
\endgroup%